\def\ps@pprintTitle{%
	\let\@oddhead\@empty
	\let\@evenhead\@empty
	\def\@oddfoot{}%
	\let\@evenfoot\@oddfoot}
\newcommand{\R}{\mathbb{R}}
\newcommand{\Z}{\mathbb{Z}}
\newcommand{\dd}{{\rm d}}
\newcommand{\ReLU}{\operatorname{ReLU}}
\newcommand{\sigmoid}{\operatorname{sigmoid}}
\newcommand{\ExSpliNet}{\operatorname{ExSpliNet}}
\newcommand{\ONN}{\textit{\textbf{ONN}}}
\newcommand{\INN}[1]{\textit{\textbf{INN}}_{#1}}
\newcommand{\SciNot}[2]{#1 \cdot 10^{#2}}
\newtheorem{theorem}{Theorem}[section]
\newtheorem{proposition}[theorem]{Proposition}
\newtheorem{definition}[theorem]{Definition}
\newtheorem{remark}[theorem]{Remark}
\newtheorem{experiment}[theorem]{Experiment}
\begin{document}
	
	\title{ExSpliNet: An interpretable and expressive spline-based neural network}
	
	\author{Daniele Fakhoury}
	\author{Emanuele Fakhoury}
	\author{Hendrik Speleers}
	\address{University of Rome Tor Vergata, Rome, Italy}
	
	\begin{abstract}
		In this paper we present ExSpliNet, an interpretable and expressive neural network model. The model combines ideas of Kolmogorov neural networks, ensembles of probabilistic trees, and multivariate B-spline representations.
		We give a probabilistic interpretation of the model and show its universal approximation properties. We also discuss how it can be efficiently encoded by exploiting B-spline properties. Finally, we test the effectiveness of the proposed model on synthetic approximation problems and classical machine learning benchmark datasets.
	\end{abstract}
	
	\begin{keyword}
	Kolmogorov neural networks \sep Probabilistic trees \sep Tensor-product B-splines
	\end{keyword}
	\maketitle
	
	\section{Introduction}
	Solving problems that require the approximation of data in high-dimensional spaces is computationally extremely challenging. Most of the classical approximation methods suffer from the so-called \emph{curse of dimensionality} --- their complexity grows exponentially in the dimension --- and thus in practice they can only be applied to deal with lower-dimensional problems. On the other hand, machine learning techniques and in particular \emph{(deep) neural networks} are gaining in popularity as they have  shown outstanding performance in attacking different kinds of high-dimensional problems, especially in the context of image analysis and pattern recognition.
	
	\subsection{Interpretability of neural networks}\label{sec:interpretabibility}
	Neural networks are by nature incredibly complicated models with lots of parameters and it is in general difficult to interpret the behavior of the resulting functions in terms of those parameters.
	For this reason, neural networks are often called \emph{black-box models} --- the neural network functions might be highly unpredictable --- and this makes them less suited for highly risky tasks where one needs to understand why the machine takes a particular decision.	
	As a remedy, task-dependent assumptions can be imposed on the structure of the neural network (for example, convolutional neural networks for image related tasks) and a posteriori gradient methods \cite{Ribeiro,Selvaraju} can be used to try to understand the contribution of features in the final decision.
	
	Alternatively, such problem can be circumvented by relying on simpler models such as linear and additive models \cite{Hastie}. Additive models are roughly motivated by the so-called Kolmogorov superposition theorem (KST); see Section~\ref{sec:expressivity}.
	Other attractive and widely used choices are tree models \cite{Breiman}. They are highly interpretable since the predictions are based on a list of rules that corresponds to a hierarchical partition of the input space. On the other hand, classical trees are prone to overfitting since they are a piecewise constant regressor, and in combination with a greedy algorithm used to learn them, they tend to be \emph{unstable} --- a small change in the data can lead to a large change in the structure of the optimal decision tree. 
	Probabilistic/fuzzy trees are less subject to noisy data and are able to handle uncertainty in inexact contexts and domains.
	The performance can be improved by taking an ensemble of trees such as random forests. 
	Several neural network architectures have been designed that represent additive models \cite{Agarwal,Potts} and classical \cite{Balestriero1,Yang} or probabilistic/fuzzy \cite{Biau,Kontschieder,Wang} tree structures.
	
	\subsection{Expressivity of neural networks}\label{sec:expressivity}
	Another concern with neural networks is the expressivity of the model. Approximation theory for neural networks started with the investigation of shallow networks and the nonconstructive universal approximation theorems of Cybenko \cite{Cybenko} and Hornik et al. \cite{Hornik}.
 	More recently, in \cite{Costarelli} a constructive theory for approximating absolutely continuous functions by series of sigmoidal functions was developed and related to the expressivity of shallow networks.
	In the last few years, the attention has shifted to the approximation properties of deep ReLU networks; see 
	\cite{Bach,Cohen,Montanelli1,Poggio,Telgarsky,Yarotsky2} and references therein. In particular, an important theoretical problem is to determine why and when deep networks lessen or break the curse of dimensionality to achieve a given accuracy.
	
	Kolmogorov \cite{Kolmogorov} proved that any multivariate continuous function can be written as a sum of univariate continuous functions. More formally, setting $\bm{x} := [x_1,\ldots,x_D]\in[0, 1]^D$, the KST states that any continuous function $ f : [0,1]^D \rightarrow \R$ can be decomposed as
	\begin{equation}\label{eq:KST}
	f(\bm{x}) = \sum_{i=1}^{2D + 1} \Phi_i\Biggl( \sum_{d=1}^{D} \Psi_{i,d}(x_d) \Biggr),
	\end{equation}
	where the $\Psi$'s and $\Phi$'s are univariate continuous functions  on $[0,1]$, called \emph{inner} and \emph{outer functions}, respectively.
	Theoretical connections of the KST with neural networks started
	with the work of Hecht-Nielsen \cite{Hecht}. He interpreted the KST as a neural network, whose activation functions
	were the inner and outer functions. 
	Later, K\r{u}rkov\'a discussed the relevance of the KST \cite{Kurkova1} and provided a direct proof of the universal approximation theorem of multilayer neural networks based on the KST \cite{Kurkova2}.
	More recently, in \cite{Montanelli2}, the KST was applied to lessen the curse of dimensionality.
	Several implementations and constructive algorithms were proposed to generalize and add regularity to the inner and outer functions \cite{Braun,Koppen,Sprecher1,Sprecher2}. In the same spirit, a Kolmogorov spline network (based on cubic splines) was developed in \cite{Igelnik}. 
	
	\subsection{Splines in neural networks}
	The selection of activation functions in a neural network has a significant impact on the training process. There is, however, no obvious way to choose them because the ``optimal choice'' may depend on the specific task or problem to be solved. 
	Nowadays, ReLU activation functions (and variations) are the default choice in the zoo of activation functions for many types of neural networks.

	Univariate spline functions are a powerful tool in approximation theory \cite{Lyche,Sande1,Schumaker}. These are piecewise polynomials of a certain degree and global smoothness. In particular, it is known that maximally smooth splines have an eminent approximation behavior per degree of freedom \cite{Bressan,Sande2}. ReLU functions are a special instance of splines --- they are linear spline functions.
	More general (learnable) spline activation functions were studied in \cite{Campolucci,Friedman,Guarnieri,Vecci} and recently in \cite{Bohra,Scardapane} showing that their flexibility as activation functions allows for a reduction of the overall size of the network for a given accuracy. Hence, there is a trade-off between architecture complexity and activation function complexity.
	
	Univariate spline functions can be represented as linear combinations of so-called \emph{B-splines}, a set of locally supported basis functions that forms a nonnegative partition of unity. B-spline representations are an attractive choice for approximating univariate smooth functions since they can be compactly described by a small amount of parameters, yet each parameter has a local effect.
	Moreover, fast and stable algorithms are available for their computation \cite{deBoor,Lyche}. Multivariate extensions can be easily obtained by taking tensor products of B-splines.
	
	The advantage of B-splines in neural networks has been widely acknowledged. They are directly incorporated in the hidden layer of the so-called B-spline neural network (BSNN) \cite{Harris}. Thanks to the local support property of B-splines, such a network stores the information locally, which means that learning in one part of the input space minimally affects the rest of it, and this is very suitable for tasks of system identification; see, e.g., \cite{Coelho2,Lightbody,WangL}. Unfortunately, the BSNN is infeasible for data defined on high-dimensional domains due to the costly tensor-product structure \cite{Karagoz}.
	Convolutional neural networks, where the convolution operator is based on B-splines, have been proposed in \cite{Fey}. State-of-the-art results were achieved in the fields of image graph classification, shape correspondence and graph node classification, while being significantly faster. 
	
	Recently, Balestriero and Baraniuk \cite{Balestriero2} built a general bridge between  neural networks and spline approximation theory. They show that a large class of neural networks (including feed-forward ReLU networks and convolutional neural networks) can be regarded as an {additive linear spline model} on a very unstructured partition of the domain.
	This unstructuredness, however, complicates the analysis and its interpretability. On the other hand, forcing the structure of the partition to be orthogonal is a kind of regularization technique that leads to higher performance in terms of generalization ability.

	\subsection{Contributions of the paper}
	In this paper we present a new neural network model, called \emph{ExSpliNet}, that combines ideas of Kolmogorov neural networks,  ensembles of probabilistic trees, and multivariate B-spline representations.
	
	In the vein of \eqref{eq:KST}, ExSpliNet uses univariate splines as inner functions that feed $L$-variate tensor-product splines as outer functions, all of them represented in terms of B-splines. Here, $L$ is supposed to be not too high.
	The difference with a standard Kolmogorov model (like the one in \cite{Igelnik}) is that the outer functions are allowed to be multivariate functions instead of univariate functions (the latter is a special case where $L=1$).
	The new model is a feasible generalization of the BSNN model towards high-dimensional data. Specifically, for low input dimensions, the network parameters can be chosen so that ExSpliNet reproduces the output of a BSNN (in this case $L=D$). However, for high input dimensions, one can rely on the Kolmogorov-like structure to avoid the use of high-variate tensor-product B-splines and still maintain expressive power.
	
	The inner functions act as low-dimensional feature extractors. The outer functions can be regarded as probabilistic trees.
	This brings the proposed network model in strong connection with probabilistic regression tree models (like the one in \cite{Alkhoury}) that are more interpretable and robust to noise.
	The complete model can be efficiently evaluated thanks to the computational properties of B-splines. Moreover, it is explicitly differentiable when taking B-splines of degrees at least two. The inner and outer functions are customizable and could be tailored in different shapes to address various tasks by controlling the trade-off between interpretability and expressivity of the model.
	
	In other words, ExSpliNet is a customizable, interpretable and expressive model obtained by integrating the good approximation properties of spline functions with the interpretability of probabilistic tree models and the feature learning capability of neural network models. In addition, it is endowed with the computational properties of spline functions represented in terms of B-splines.
	
	Furthermore, we carry out a theoretical study of the universal approximation properties of ExSpliNet. Specifically, for both extreme cases $L=1$ and $L=D$, we show that ExSpliNet has the ability of a universal approximator. The main ingredients of the proof are the KST and classical approximation estimates for multivariate splines.
	
	Finally, we illustrate the suitability of the proposed model to address data-driven function approximation and to face differential problems, in the spirit of physics-informed neural networks (PINNs) \cite{Raissi}. We also show the general applicability of the model for classical machine learning tasks like image classification and regression.
	
	\subsection{Outline of the paper}
	The paper is structured as follows. In Section~\ref{sec:Splines} we review B-splines and some of their main properties.
	In Section~\ref{sec:Architecture} we present our novel network model and discuss basic implementation aspects. 
	Section~\ref{sec:Gradients} focuses on the model's use in optimization and how to explicitly compute the gradients with respect to different parameters.
	In Section~\ref{sec:Interpretation} we give an interpretation of the model in terms of feature extractors and probabilistic trees, which is illustrated by means of the classical Iris dataset in Section~\ref{sec:Interpretation-iris}.
	In Section~\ref{sec:Approximation} we describe two universal approximation results for ExSpliNet, the first based on the KST and the second on multivariate spline theory.
	Section~\ref{sec:Experiments} demonstrates the effectiveness of the model on synthetic approximation problems and classical machine learning benchmark datasets.
	Finally, in Section~\ref{sec:Conclusions} we end with some concluding remarks and ideas for future research.

	\section{Preliminaries on B-splines}\label{sec:Splines}
	In this section we provide the definition and main properties of the B-spline functions that play a major role in our neural network model. For the sake of simplicity, we focus on B-splines defined on uniform partitions. We refer the reader to \cite{deBoor,Lyche} for more details on B-splines.
	
	In order to construct B-splines we need the concept of knot sequence. A knot sequence $\bm{\xi}$ is a nondecreasing sequence of real numbers,
	$$\bm{\xi} := \{\xi_1 \leq \xi_2 \leq \cdots \leq \xi_{r} \}.$$
	The elements of $\bm{\xi}$ are called \emph{knots}. 
	Assuming integer values $r\geq p+2\geq 2$, on such sequence we can define $N:=r-p-1$ B-splines of degree $p$.	
	\begin{definition}\label{def:Bspline}
	Given a knot sequence $\bm{\xi}$, the $n$-th B-spline of degree $p\geq0$ is identically zero if $\xi_{n+p+1} = \xi_{n} $ and otherwise defined recursively by
	\begin{align*}
	&B_{\bm{\xi},p,n}(x) := \frac{x-\xi_n}{\xi_{n+p}- \xi_n} B_{\bm{\xi},p-1,n}(x) \\
	&\quad+\frac{\xi_{n+p+1} -x}{\xi_{n+p+1} -\xi_{n+1}} B_{\bm{\xi},p-1,n+1}(x),
	\end{align*}
	starting from
	$$ B_{\bm{\xi},0,n} := \begin{cases}
	1,  \quad x\in[\xi_n,\xi_{n+1}),\\
	0,  \quad \mbox{otherwise}.
	\end{cases} $$
	Here, we use the convention that fractions with zero denominator have value zero.
	\end{definition}
	
	B-splines possess several interesting properties. The function 
	$B_{\bm{\xi},p,n}$ is a nonnegative, piecewise polynomial of degree $p$ that is locally supported on the interval $[\xi_{n},\xi_{n+p+1}]$.
	Moreover, its integral is equal to
	$$\int_{\xi_{n}}^{\xi_{n+p+1}}B_{\bm{\xi},p,n}(x)\,\dd x=\frac{\xi_{n+p+1}-\xi_n}{p+1},$$
	and for $p\geq1$ its right-hand derivative can be simply computed as
	\begin{equation}\label{eq:diff-Bspline}
	\frac{\dd }{\dd x^+}B_{\bm{\xi},p,n}(x)=p\biggl(\frac{B_{\bm{\xi},p-1,n}(x)}{\xi_{n+p}-\xi_n}-\frac{B_{\bm{\xi},p-1,n+1}(x)}{\xi_{n+p+1}-\xi_{n+1}}\biggr).
	\end{equation}
	
	From now on, for the sake of simplicity, we assume that the knots are chosen as
	\begin{equation}\label{eq:knots}
	\begin{aligned}
	  &\xi_1=\cdots=\xi_{p+1}=0,\\
	  &\xi_{p+i+1}=\frac{i}{N-p},\quad i=0,\ldots,N-p,\\
	  &\xi_{N+1}=\cdots=\xi_{N+p+1}=1,
	\end{aligned}
	\end{equation}
	for some integer values $N>p\geq0$.
	The corresponding B-splines are denoted by
	$$B_{N,p,n}(x):=B_{\bm{\xi},p,n}(x),\quad n=1,\ldots,N,$$
	and to avoid asymmetry at $x=1$, we define them to be left continuous there, i.e.,
	$$B_{N,p,n}(1):=\lim_{\substack{x\to1\\ x<1}}B_{\bm{\xi},p,n}(x), \quad n=1,\ldots,N.$$
	We collect these functions in the vector
	\begin{equation}\label{eq:vector-B-splines}
	\mathcal{B}_{N,p}(x) := [B_{N,p,1}(x),\ldots,B_{N,p,N}(x)].	  
	\end{equation}
	The B-splines in \eqref{eq:vector-B-splines} are linearly independent and belong to the continuity class $C^{p-1}([0,1])$. They span the full space of piecewise polynomials of degree less than or equal to $p$ that belong to $C^{p-1}([0,1])$  on the partition induced by the knots in \eqref{eq:knots}. Moreover, they sum up to one on $[0,1]$, i.e.,
	\begin{equation*}
		\sum_{n = 1}^N B_{N,p,n}(x) = 1, \quad x \in [0,1].
	\end{equation*}
	Note that the right-hand derivative in \eqref{eq:diff-Bspline} implies the standard derivative for $p\geq2$. 
	Some B-spline vectors are depicted in Figure~\ref{fig:Cardinal}.
	\begin{figure}[t!]
		\centering
		\subfigure[$p=1$]{\includegraphics[height=3.6cm]{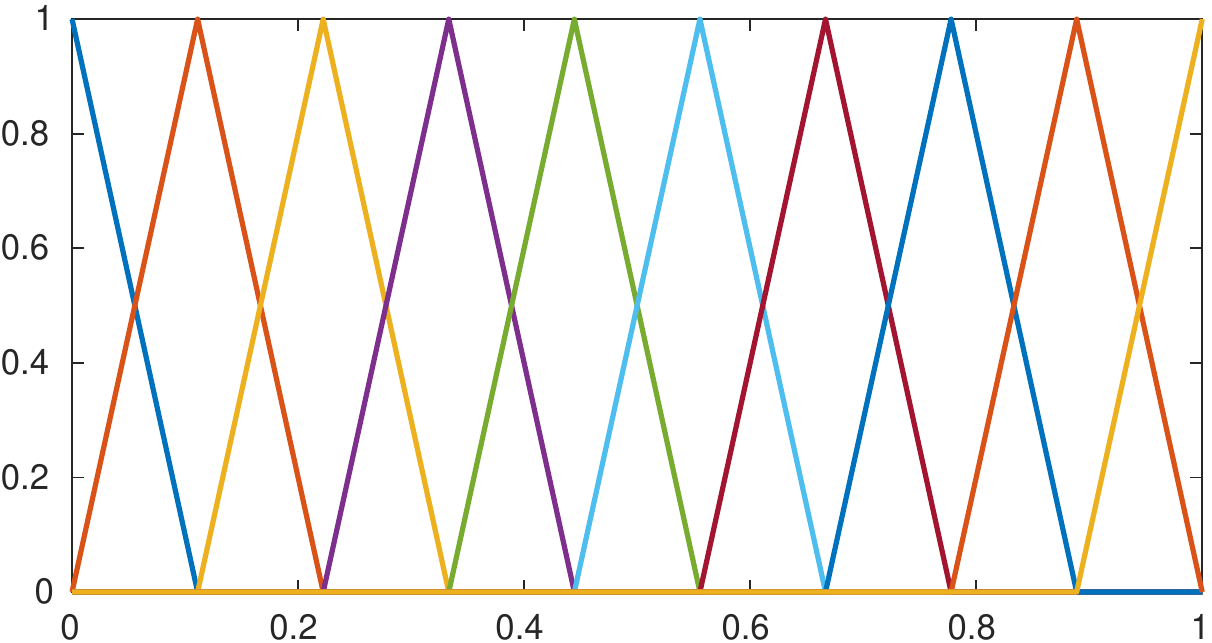}}
		\subfigure[$p=2$]{\includegraphics[height=3.6cm]{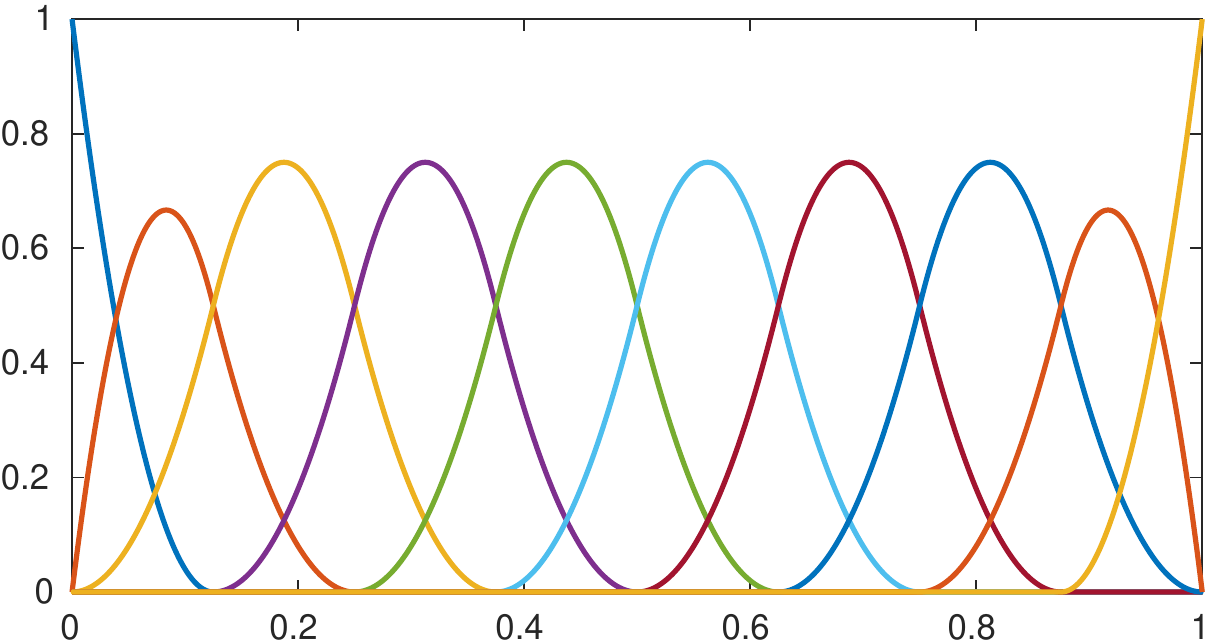}}
		\subfigure[$p=3$]{\includegraphics[height=3.6cm]{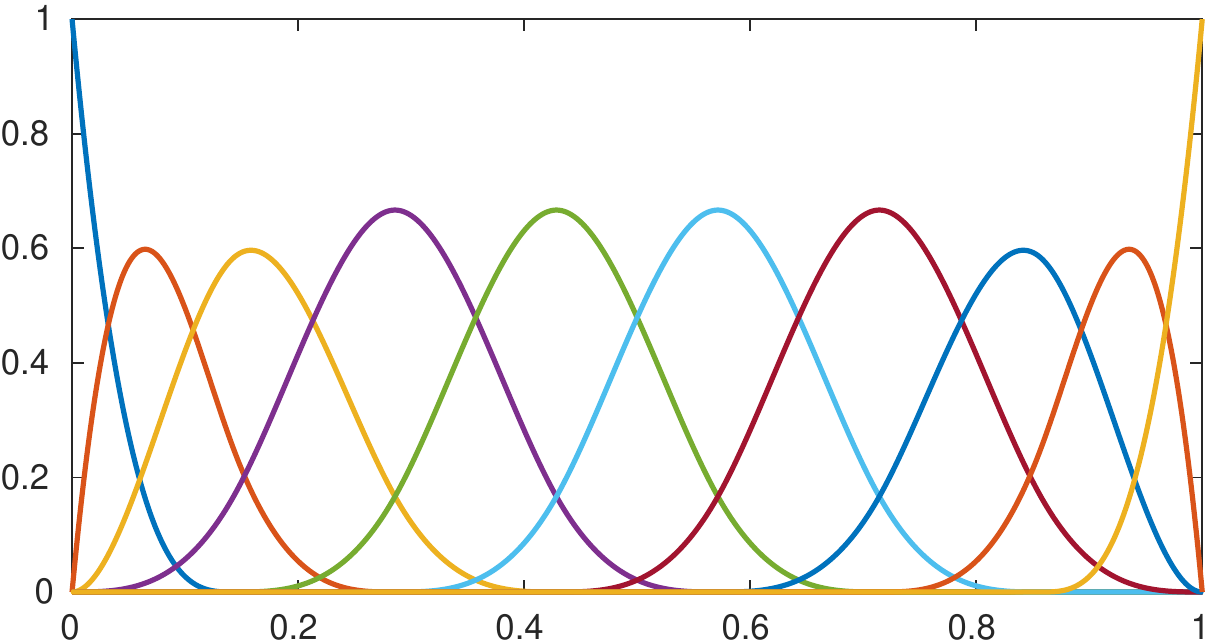}}
		\caption{The B-splines in the vectors $\mathcal{B}_{N,p}(x)$ for $N=10$, $p = 1,2,3$, and $x\in[0,1]$.}
		\label{fig:Cardinal}
	\end{figure}
	
	For $p\geq1$ the identity function on $[0,1]$ can be exactly represented as
	\begin{equation}\label{eq:identity-Bspline}
	\sum_{n = 1}^N \xi^*_{N,p,n} B_{N,p,n}(x) = x, \quad x \in [0,1],
	\end{equation}
	where 
	\begin{equation}\label{eq:greville}
	\xi^*_{N,p,n}:=\frac{\xi_{n+1}+\ldots+\xi_{n+p}}{p}, \quad n=1,\ldots,N,  
	\end{equation}
	are the so-called \emph{Greville abscissae}. We observe that $0=\xi^*_{N,p,1}<\xi^*_{N,p,n}< \xi^*_{N,p,N}=1$ for $1<n<N$.
	
	A \textit{spline} is a linear combination of B-splines, say
	\begin{equation}\label{eq:spline}
	s(x) := \sum_{n = 1}^N w_n B_{N,p,n}(x), \quad x \in [0,1],
	\end{equation}
	for given weights $w_n\in\R$
	and can be efficiently evaluated via the recursive algorithm described in Proposition~\ref{pro:deBoor}. 
	This evaluation procedure is known as the \emph{de Boor algorithm}.
	Note that, due to the local support property, at most $p+1$ consecutive B-splines (instead of $N$) are nonzero at any $x$.
	
	\begin{proposition}\label{pro:deBoor}
	Let $s$ be a spline represented as in \eqref{eq:spline}.
	Assume $x\in[\xi_{m},\xi_{m+1})$ such that $p+1\leq m\leq N$. Set $w_{n,0}:=w_{n}$ for $n=m-p,\ldots,m$, and 
	$$w_{n,q+1} := \frac{x-\xi_n}{\xi_{n+p-q}-\xi_n}w_{n,q} + \frac{\xi_{n+p-q}-x}{\xi_{n+p-q}-\xi_n}w_{n-1,q},$$
	for $n=m-p+q+1,\ldots,m$ and $q=0,\ldots,p-1$.
	Then, we have $s(x)=w_{m,p}$.
	\end{proposition}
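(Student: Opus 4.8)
The plan is to prove the identity by induction on the recursion level $q$, establishing along the way the invariant
\begin{equation*}
s(x)=\sum_{n=m-p+q}^{m} w_{n,q}\,B_{N,p-q,n}(x),\qquad q=0,1,\ldots,p.
\end{equation*}
The case $q=0$ is immediate from \eqref{eq:spline} together with the local support property: at $x\in[\xi_m,\xi_{m+1})$ the only degree-$p$ B-splines that do not vanish are $B_{N,p,n}$ with $m-p\leq n\leq m$, so the full sum over $n=1,\ldots,N$ collapses to the stated one. The case $q=p$ closes the argument: the invariant then reads $s(x)=w_{m,p}\,B_{N,0,m}(x)$, and since $x\in[\xi_m,\xi_{m+1})$ the base case in Definition~\ref{def:Bspline} gives $B_{N,0,m}(x)=1$, whence $s(x)=w_{m,p}$.

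For the inductive step I would assume the invariant at a level $q<p$ and apply the recurrence of Definition~\ref{def:Bspline} to each factor $B_{N,p-q,n}(x)$, rewriting it in terms of $B_{N,p-q-1,n}(x)$ and $B_{N,p-q-1,n+1}(x)$ with coefficients $\tfrac{x-\xi_n}{\xi_{n+p-q}-\xi_n}$ and $\tfrac{\xi_{n+p-q+1}-x}{\xi_{n+p-q+1}-\xi_{n+1}}$. Substituting this and then shifting the index in the second batch of terms so that everything is expressed over the degree-$(p-q-1)$ B-splines, the coefficient of a fixed $B_{N,p-q-1,n}(x)$ collects exactly two contributions: $\tfrac{x-\xi_n}{\xi_{n+p-q}-\xi_n}\,w_{n,q}$ from the term of index $n$ and $\tfrac{\xi_{n+p-q}-x}{\xi_{n+p-q}-\xi_n}\,w_{n-1,q}$ from the term of index $n-1$. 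Their sum is exactly the quantity $w_{n,q+1}$ defined in the statement, so the invariant passes to level $q+1$.

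Two auxiliary points need to be checked. First, the denominators $\xi_{n+p-q}-\xi_n$ that appear in $w_{n,q+1}$ are nonzero for the indices actually occurring: when $m-p+q+1\leq n\leq m$ one has $n+p-q\geq m+1$ and $n\leq m$, hence $\xi_{n+p-q}\geq\xi_{m+1}>\xi_m\geq\xi_n$ because $\xi_m\leq x<\xi_{m+1}$. Second, the expansion also produces the two boundary B-splines $B_{N,p-q-1,m-p+q}(x)$ and $B_{N,p-q-1,m+1}(x)$, which lie outside the index range $\{m-p+q+1,\ldots,m\}$ of level $q+1$; both vanish at $x\in[\xi_m,\xi_{m+1})$ --- the first because $x\geq\xi_m$ is at or beyond the right end $\xi_m$ of its support $[\xi_{m-p+q},\xi_m]$, the second because its support $[\xi_{m+1},\xi_{m+p-q+1}]$ lies to the right of $x$ --- and where a boundary coefficient has a zero denominator one invokes the convention of Definition~\ref{def:Bspline}. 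Thus the index range contracts by one on the left exactly as required. I expect the only genuine obstacle to be this bookkeeping in the inductive step --- keeping the shifting summation bounds consistent with the local support constraints and the zero-denominator convention --- since everything else is a direct application of the B-spline recurrence.
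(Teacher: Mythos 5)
The paper does not actually prove this proposition: it is stated as the classical de Boor algorithm and deferred to the cited B-spline references, so there is no in-paper argument to compare against. Your proof is correct and is precisely the standard argument from those references --- induction on $q$ with the invariant $s(x)=\sum_{n=m-p+q}^{m}w_{n,q}B_{N,p-q,n}(x)$, using the B-spline recurrence to pass from level $q$ to $q+1$ --- and you handle the two genuinely delicate points (nonvanishing of the denominators $\xi_{n+p-q}-\xi_n$ for the interior indices, and the disappearance of the two boundary B-splines $B_{N,p-q-1,m-p+q}$ and $B_{N,p-q-1,m+1}$ on $[\xi_m,\xi_{m+1})$) correctly.
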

	
	Given a vector $\bm{w}:=[w_1,\ldots,w_N]\in\R^N$, we can compactly write any linear combination of the functions in $\mathcal{B}_{N,p}(x)$ via a dot product, i.e.,
	\begin{equation}\label{eq:spline-vector}
	s^{N,p}_{\bm{w}}(x) := \bm{w}\cdot \mathcal{B}_{N,p}(x).
	\end{equation}
		
	Finally, we extend the above functions to the multivariate setting by using a tensor-product structure.	
	Given the vectors $\bm{p} := [p_1, \ldots, p_D]\in\Z^D$ with each $p_d\geq0$, $\bm{N} := [N_1, \ldots, N_D]\in\Z^D$ with each $N_d>p_d$,  and $\bm{x} := [x_1, \ldots, x_D] \in [0,1]^D$, 
	we define the vector of tensor-product B-splines as
	\begin{equation}\label{eq:vector-B-splines-TP}
	\mathcal{B}_{\bm{N},\bm{p}}(\bm{x}):= \bigotimes_{d=1}^D \mathcal{B}_{N_d,p_d}(x_d).
	\end{equation}
	Similar to \eqref{eq:spline-vector}, given a vector $\bm{w}\in\R^{N_1\cdots N_D}$, we can compactly write any linear combination as
	$$s^{\bm{N},\bm{p}}_{\bm{w}}(\bm{x}) := \bm{w} \cdot \mathcal{B}_{\bm{N},\bm{p}}(\bm{x}).$$
	Note that, by exploiting the inherent tensor-product structure, most mathematical operations (like evaluation) on the $D$-variate spline $s^{\bm{N},\bm{p}}_{\bm{w}}$ can be simply transformed into a sequence of analogous operations on univariate splines of the form \eqref{eq:spline-vector}.

	\section{ExSpliNet} \label{sec:Architecture}
	In this section we describe the general architecture of our Kolmogorov-like neural network model, called \emph{ExSpliNet}. We also discuss some implementation aspects.
	
	\subsection{Network architecture}
	For a given input dimension $D$ and output dimension $O$, we fix two additional integer hyperparameters $T$ and $L$. We will refer to $T$ as the \emph{number of trees} and $L$ as the \emph{number of levels} (this terminology will be clarified in Section~\ref{sec:Interpretation-outer}).
	Then, we specify the vectors $\bm{p} := [p_1,\ldots,p_L]\in\Z^L$ and $\bm{N} := [N_1,\ldots,N_L]\in\Z^L$ such that each $N_\ell> p_\ell\geq 0$, and the vectors $\bm{q} := [q_1,\ldots,q_L]\in\Z^L$ and $\bm{M} := [M_1,\ldots,M_L]\in\Z^L$ such that each $M_\ell> q_\ell\geq 0$.
	
	For $t = 1,\ldots,T$, $\ell=1,\ldots,L$, $d=1,\ldots,D$, and $o=1,\ldots,O$, the weight parameters of the network model are given by the vectors $\bm{v}^{t,\ell,d} \in \R^{N_\ell} $ and $\bm{w}^{o,t} \in \R^{M_1 \cdots M_L}$.
	We group all those weight parameters into
	$$ \bm{V} :=[(\bm{v}^{t,\ell,d})_{t,\ell,d}], \quad \bm{W} :=[(\bm{w}^{o,t})_{o,t}],$$
	and call them the \emph{inner} and \emph{outer weights}, respectively. We also define the subgroups $\bm{V}^{t,\ell}:=[\bm{v}^{t,\ell,1},\ldots,\bm{v}^{t,\ell,D}]$ and
	$\bm{W}^o :=[\bm{w}^{o,1},\ldots,\bm{w}^{o,T}]$.
	
	We are now ready to define our Kolmogorov-like neural network model with spline inner and outer functions. We assume that the input variables are given by $\bm{x}:=[x_1,\ldots,x_D]\in[0,1]^D$.
	\begin{definition}\label{def:ExSpliNet}
	The model $\ExSpliNet$ is a family of Kolmogorov-like functions  parameterized by $\bm{V}$ and $\bm{W}$ as
	\begin{gather*}
	\ExSpliNet^{T,\bm{N},\bm{M},\bm{p},\bm{q}}_{\bm{V},\bm{W}}: [0,1]^D \to \R^O:\\
	\bm{x}  \to
	 \Bigl[ E^{T,\bm{N},\bm{M},\bm{p},\bm{q}}_{\bm{V},\bm{W}^1} (\bm{x}), \ldots, E^{T,\bm{N},\bm{M},\bm{p},\bm{q}}_{\bm{V},\bm{W}^O}(\bm{x} ) \Bigr],
	\end{gather*}
	where
   $$E^{T,\bm{N},\bm{M},\bm{p},\bm{q}}_{\bm{V},\bm{W}^{o}} (\bm{x}) := 
	 \sum_{t=1}^{T} \Phi^{\bm{M},\bm{q}}_{\bm{w}^{o,t}}\Biggl( \biggl[\sum_{d=1}^D \Psi^{N_\ell,p_\ell}_{\bm{v}^{t,\ell,d}}(x_d) \biggr]_{\ell} \Biggr)$$
	and
	\begin{align*}
	\Psi^{N_\ell,p_\ell}_{\bm{v}^{t,\ell,d}}(x_d) &:= \bm{v}^{t,\ell,d}\cdot \mathcal{B}_{N_\ell,p_\ell}(x_d), \quad x_d\in[0,1],\\
	\Phi^{\bm{M},\bm{q}}_{\bm{w}^{o,t}}(\bm{y}_t) &:= \bm{w}^{o,t}\cdot \mathcal{B}_{\bm{M},\bm{q}}(\bm{y}_t), \quad \bm{y}_t\in[0,1]^L.
	\end{align*}
	\end{definition}
	
	Each Kolmogorov-like function $E^{T,\bm{N},\bm{M},\bm{p},\bm{q}}_{\bm{V},\bm{W}^{o}}$ has univariate splines as inner functions and $L$-variate splines as outer functions. 
	To be sure that this function is well defined, it is required that the range of each function
	\begin{equation}\label{eq:decision-fun}
	\Psi^{N_\ell,p_\ell}_{\bm{V}^{t,\ell}}(\bm{x}) := \sum_{d=1}^D \Psi^{N_\ell,p_\ell}_{\bm{v}^{t,\ell,d}}(x_d)
	\end{equation} 
	belongs to the interval $[0,1]$ since it is an argument of the outer spline function $\Phi^{\bm{M},\bm{q}}_{\bm{w}^{o,t}}$. 
	As shown in Proposition~\ref{pro:decision-fun-range}, this can be safeguarded by imposing that the components $v^{t,\ell,d}_{n_\ell}$, $n_\ell=1,\ldots,N_\ell$ of the vector $\bm{v}^{t,\ell,d} \in \R^{N_\ell} $ satisfy 
    \begin{equation}\label{eq:V-condition}
    0\leq v^{t,\ell,d}_{n_\ell} \leq \nu^{t,\ell,d}, \quad \sum_{d=1}^{D}\nu^{t,\ell,d}\leq1,
    \end{equation}
    for some $\nu^{t,\ell,d}\in\R$. From now on we assume that a valid instance of the network model satisfies \eqref{eq:V-condition}.
	
	\begin{proposition}\label{pro:decision-fun-range}
	The function in \eqref{eq:decision-fun} satisfies
	\begin{equation}\label{eq:decision-fun-range}
	0\leq \Psi^{N_\ell,p_\ell}_{\bm{V}^{t,\ell}}(\bm{x})\leq1,\quad \bm{x} \in [0,1]^D, 
	\end{equation}
	under the assumption \eqref{eq:V-condition}.
	\end{proposition}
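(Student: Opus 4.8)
The plan is to exploit two elementary but crucial properties of the univariate B-spline basis recalled in Section~\ref{sec:Splines}: each $B_{N_\ell,p_\ell,n}$ is nonnegative on $[0,1]$, and the whole family forms a partition of unity, $\sum_{n=1}^{N_\ell} B_{N_\ell,p_\ell,n}(x)=1$ for $x\in[0,1]$. Combined with the sign and magnitude constraints in \eqref{eq:V-condition} on the inner weights, these immediately pin down the range of each one-dimensional spline $\Psi^{N_\ell,p_\ell}_{\bm{v}^{t,\ell,d}}$, and a summation over $d$ then closes the argument.

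First I would fix $t$, $\ell$, $d$ and expand $\Psi^{N_\ell,p_\ell}_{\bm{v}^{t,\ell,d}}(x_d)=\sum_{n_\ell=1}^{N_\ell} v^{t,\ell,d}_{n_\ell}\, B_{N_\ell,p_\ell,n_\ell}(x_d)$ according to Definition~\ref{def:ExSpliNet} and \eqref{eq:spline-vector}. Since $B_{N_\ell,p_\ell,n_\ell}(x_d)\geq0$ and $v^{t,\ell,d}_{n_\ell}\geq0$ by the left inequality in \eqref{eq:V-condition}, every term in the sum is nonnegative, so $\Psi^{N_\ell,p_\ell}_{\bm{v}^{t,\ell,d}}(x_d)\geq0$. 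For the upper bound, the right inequality in \eqref{eq:V-condition} gives $v^{t,\ell,d}_{n_\ell}\leq\nu^{t,\ell,d}$, hence
\[
\Psi^{N_\ell,p_\ell}_{\bm{v}^{t,\ell,d}}(x_d)\leq \nu^{t,\ell,d}\sum_{n_\ell=1}^{N_\ell}B_{N_\ell,p_\ell,n_\ell}(x_d)=\nu^{t,\ell,d},
\]
using the partition-of-unity identity valid for $x_d\in[0,1]$.

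Then I would sum these bounds over $d=1,\ldots,D$. Recalling the definition \eqref{eq:decision-fun}, nonnegativity of each summand gives $\Psi^{N_\ell,p_\ell}_{\bm{V}^{t,\ell}}(\bm{x})\geq0$, while $\sum_{d=1}^D\Psi^{N_\ell,p_\ell}_{\bm{v}^{t,\ell,d}}(x_d)\leq\sum_{d=1}^D\nu^{t,\ell,d}\leq1$ by the constraint $\sum_{d=1}^{D}\nu^{t,\ell,d}\leq1$ in \eqref{eq:V-condition}. This yields \eqref{eq:decision-fun-range} for every $\bm{x}\in[0,1]^D$.

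There is no genuine obstacle here: the statement is a direct consequence of the nonnegativity and partition-of-unity properties of B-splines, which is precisely why the constraint \eqref{eq:V-condition} was designed in that form (it is essentially a ``stochasticity''-type condition on the inner weights). The only mild point of care is that nonnegativity and partition of unity are invoked on the \emph{closed} interval $[0,1]$, including the endpoint $x_d=1$; this is consistent with the left-continuous extension of $B_{N,p,n}$ at $x=1$ adopted in Section~\ref{sec:Splines}, so the bounds hold uniformly on $[0,1]^D$.
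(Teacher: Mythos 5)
Your argument is correct and is essentially the paper's own proof: both rest on the nonnegativity and partition-of-unity of the B-splines in $\mathcal{B}_{N_\ell,p_\ell}(x_d)$ together with the constraints \eqref{eq:V-condition}, the only cosmetic difference being that the paper first bounds each dot product by $\min(\bm{v}^{t,\ell,d})$ and $\max(\bm{v}^{t,\ell,d})$ before invoking $0\leq v^{t,\ell,d}_{n_\ell}\leq\nu^{t,\ell,d}$, whereas you substitute those bounds directly. Your remark about the left-continuous convention at $x_d=1$ is a fine (if optional) extra precaution.
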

	\begin{proof}
	Fix $\bm{x} \in [0,1]^D$. By construction we have
	$$\Psi^{N_\ell,p_\ell}_{\bm{V}^{t,\ell}}(\bm{x})
	= \sum_{d=1}^D  \bm{v}^{t,\ell,d} \cdot \mathcal{B}_{N_\ell,p_\ell}(x_d).$$
	Moreover, by the nonnegativity and the partition-of-unity property of the functions collected in the vector $\mathcal{B}_{N_\ell,p_\ell}(x_d)$, it is clear that 
	$$\min(\bm{v}^{t,\ell,d}) \leq \bm{v}^{t,\ell,d} \cdot \mathcal{B}_{N_\ell,p_\ell}(x_d) \leq \max(\bm{v}^{t,\ell,d}).$$
	Thus,
	$$\sum_{d=1}^D\min(\bm{v}^{t,\ell,d}) \leq \Psi^{N_\ell,p_\ell}_{\bm{V}^{t,\ell}}(\bm{x}) \leq \sum_{d=1}^D \max(\bm{v}^{t,\ell,d}).$$
	These inequalities combined with the conditions in \eqref{eq:V-condition} ensure that \eqref{eq:decision-fun-range} is satisfied.
	\end{proof}
	
	\begin{remark}\label{rmk:V-condition-simple}
	It is easy to see that the conditions in \eqref{eq:V-condition} are satisfied when imposing the simpler conditions
	\begin{equation*}%\label{eq:V-condition-simple}
		0\leq v^{t,\ell,d}_{n_\ell}, \quad \sum_{d=1}^{D} \sum_{n_\ell=1}^{N_\ell} v^{t,\ell,d}_{n_\ell} \leq 1,
	\end{equation*}
    and thus they also ensure \eqref{eq:decision-fun-range}.
	\end{remark}
	
	\begin{figure}[h!]
		\centering
		\subfigure[Inner neural network]{\includegraphics[scale=0.19]{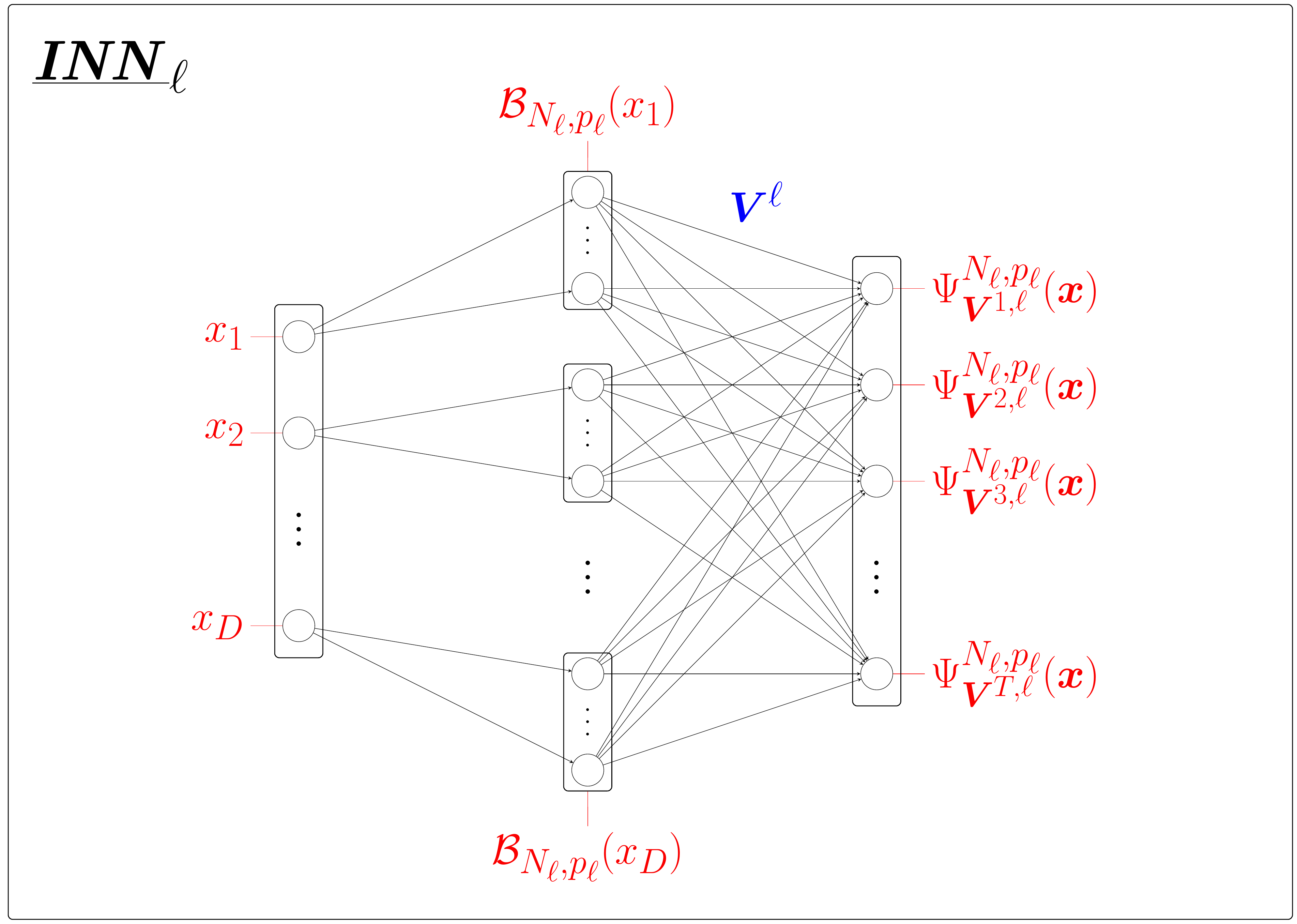}}
		\subfigure[Outer neural network]{\includegraphics[scale=0.19]{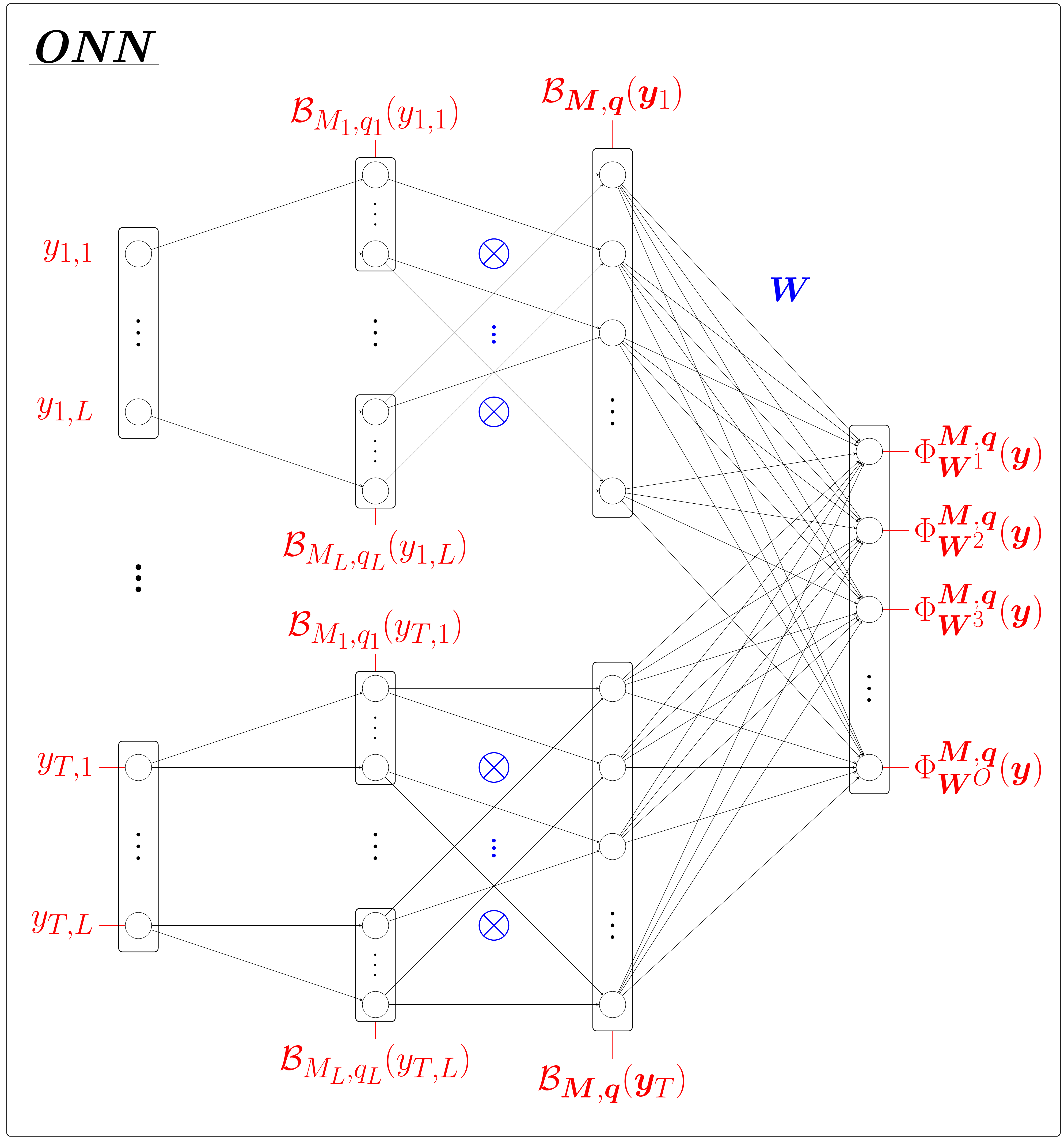}}
		\subfigure[ExSpliNet]{\includegraphics[scale=0.19]{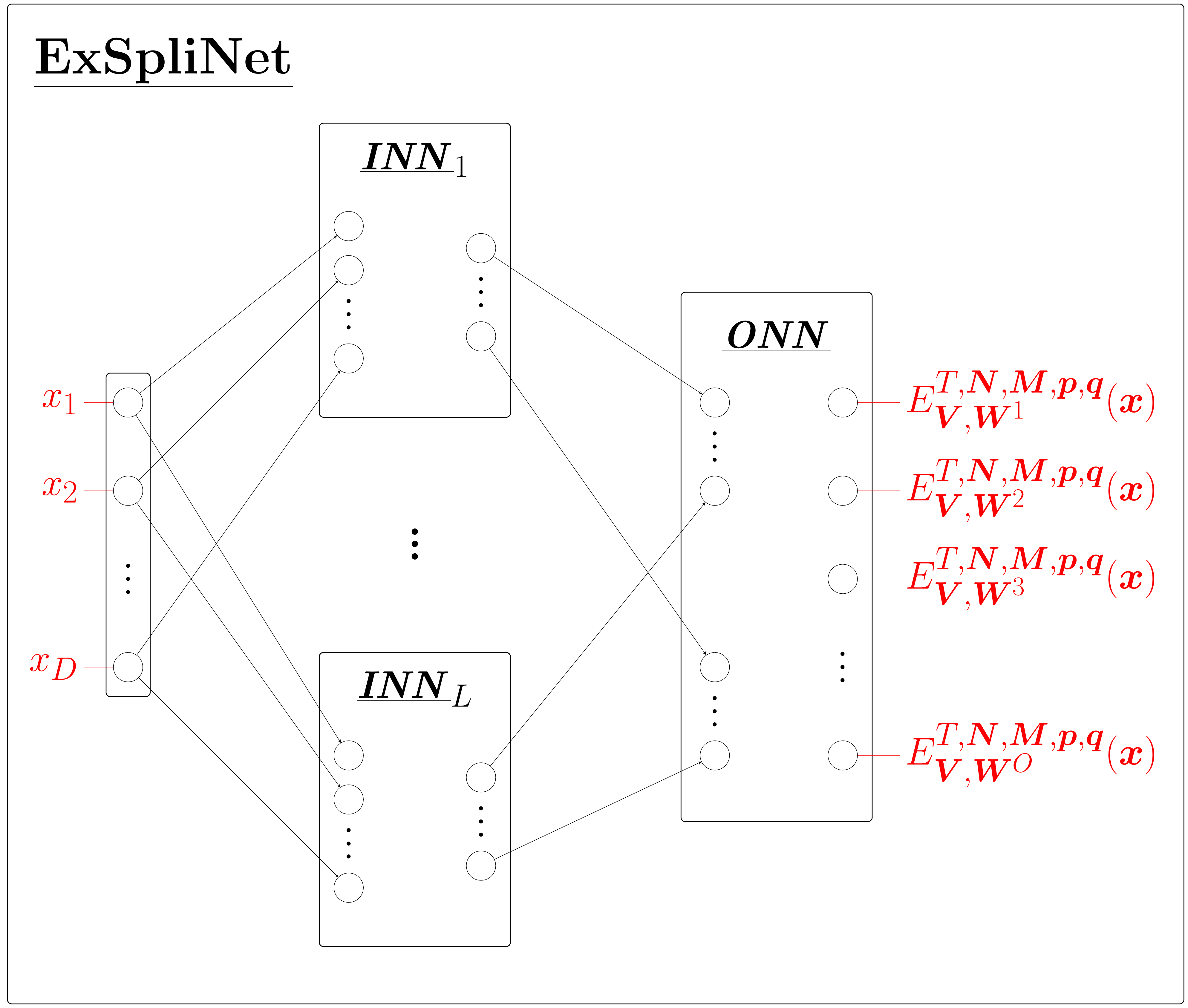}}
		\caption{A graphical representation of the architecture of ExSpliNet as a composition of inner and outer neural networks.}
		\label{fig:architecture}\vspace*{-0.8cm}
	\end{figure}
	
	The complexity of the network model can be described as follows.
	\begin{proposition}\label{pro:params}
	The number of weight parameters involved in $\ExSpliNet^{T,\bm{N},\bm{M},\bm{p},\bm{q}}_{\bm{V},\bm{W}}$ is
	$$ D T \sum_{\ell=1}^{L} N_\ell + O T \prod_{\ell=1}^L M_\ell.$$ 
	\end{proposition}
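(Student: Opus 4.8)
The plan is to prove this by a direct enumeration of the weight vectors appearing in Definition~\ref{def:ExSpliNet}, separating the count into the contribution of the inner weights $\bm{V}$ and that of the outer weights $\bm{W}$, and then adding the two totals.

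First I would handle the inner weights. By construction $\bm{V}=[(\bm{v}^{t,\ell,d})_{t,\ell,d}]$ consists of one vector $\bm{v}^{t,\ell,d}\in\R^{N_\ell}$ for each index triple with $t=1,\ldots,T$, $\ell=1,\ldots,L$, $d=1,\ldots,D$, and each such vector contributes exactly $N_\ell$ scalar parameters. Summing over all admissible triples gives
\begin{equation*}
\sum_{t=1}^{T}\sum_{\ell=1}^{L}\sum_{d=1}^{D} N_\ell = T\,D\sum_{\ell=1}^{L} N_\ell,
\end{equation*}
since the summand does not depend on $t$ or $d$.

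Next I would count the outer weights. The group $\bm{W}=[(\bm{w}^{o,t})_{o,t}]$ contains one vector $\bm{w}^{o,t}\in\R^{M_1\cdots M_L}$ for each pair $o=1,\ldots,O$, $t=1,\ldots,T$, and each such vector contributes $M_1\cdots M_L=\prod_{\ell=1}^{L}M_\ell$ scalar parameters. Summing over the $O\,T$ pairs yields $O\,T\prod_{\ell=1}^{L}M_\ell$. Adding this to the inner count gives the claimed total $D T \sum_{\ell=1}^{L} N_\ell + O T \prod_{\ell=1}^L M_\ell$.

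There is no real obstacle here: the only point requiring a little care is to make sure no parameters are double-counted and that the constraint~\eqref{eq:V-condition} on the inner weights is understood as a restriction on admissible values rather than a reduction in the number of free coordinates, so it does not change the parameter count. One could add a remark that the effective number of trainable degrees of freedom is slightly smaller once~\eqref{eq:V-condition} (or the simpler form in Remark~\ref{rmk:V-condition-simple}) is enforced, but the stated quantity is the nominal size of the parameter tensors $\bm{V}$ and $\bm{W}$.
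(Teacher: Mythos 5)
Your proof is correct and follows exactly the route the paper intends: the paper's own proof is just the one-line remark that the claim follows ``by a direct counting of the weight parameters,'' and your enumeration of the $\bm{v}^{t,\ell,d}\in\R^{N_\ell}$ over the $T\,L\,D$ index triples and the $\bm{w}^{o,t}\in\R^{M_1\cdots M_L}$ over the $O\,T$ pairs is precisely that counting, carried out explicitly. Your closing remark that the constraint \eqref{eq:V-condition} restricts admissible values without changing the nominal parameter count is a correct and worthwhile clarification.
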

	\begin{proof}
	  This can be verified by a direct counting of the weight parameters.
	\end{proof}
	
	We provide a graphical representation of the network model in Figure~\ref{fig:architecture}. 
	ExSpliNet can be visualized as a composition of inner and outer neural networks.
	Given $\bm{x}:=[x_1,\ldots,x_D]\in[0,1]^D$, the inner neural network model $\INN{\ell}$ computes the functions $\Psi^{N_\ell,p_\ell}_{\bm{V}^{t,\ell}}(\bm{x})$ in \eqref{eq:decision-fun} for $t=1,\ldots,T$. The hidden layer consists of B-spline activation functions to obtain the vectors $\mathcal{B}_{N_\ell,p_\ell}(x_d)$ for $d=1,\ldots,D$. The output is then achieved as a linear combination of them using the weights $\bm{V}^{\ell}:=[\bm{V}^{1,\ell},\ldots,\bm{V}^{T,\ell}]$. There are in total $L$ inner neural networks ($\ell=1,\ldots,L$).
	Given $\bm{y}_t:=[y_{t,1},\ldots,y_{t,L}]\in[0,1]^L$ for $t=1,\ldots,T$, the outer neural network model $\ONN$ computes the functions
	$$\Phi^{\bm{M},\bm{q}}_{\bm{W}^{o}}(\bm{y}) := \sum_{t=1}^{T} \Phi^{\bm{M},\bm{q}}_{\bm{w}^{o,t}}(\bm{y}_t),$$
	for $o=1,\ldots,O$, where $\bm{y}:=[\bm{y}_1,\ldots,\bm{y}_T]$. The first hidden layer consists again of B-spline activation functions and computes the vectors $\mathcal{B}_{M_\ell,q_\ell}(y_{t,\ell})$ for $t=1,\ldots,T$ and $\ell=1,\ldots,L$. The next hidden layer blends them via tensor products into the vectors $\mathcal{B}_{\bm{M},\bm{q}}(\bm{y}_t)$ for $t=1,\ldots,T$. Finally, the output is achieved as a linear combination of them using the weights $\bm{W}:=[\bm{W}^{1},\ldots,\bm{W}^{O}]$. ExSpliNet can be simply obtained by putting a layer of inner neural networks in front of the outer neural network.
	
	\subsection{Efficient evaluation and natural dropout} \label{sec:Evaluation}
	From Definition~\ref{def:ExSpliNet} it is clear that any ExSpliNet function $E^{T,\bm{N},\bm{M},\bm{p},\bm{q}}_{\bm{V},\bm{W}^{o}}$ is a composition of univariate and multivariate spline functions. 
	
	Thanks to their representation in terms of B-splines, univariate splines can be efficiently evaluated via the de Boor algorithm (see Proposition~\ref{pro:deBoor}).
	A key ingredient is the local support of B-splines, so that many terms are zero in a sum like \eqref{eq:spline}. Specifically, for a given real value $x\in[0,1]$ and $p\geq1$, the indices of the nonzero components of the univariate B-spline vector $\mathcal{B}_{N,p}(x)$ in \eqref{eq:vector-B-splines} are part of the following subset of indices:
	\begin{equation*}%\label{eq:shifts}
		\lfloor \zeta_{N,p}(x) \rfloor + 1 - p, \ldots, \lceil \zeta_{N,p}(x) \rceil,
	\end{equation*}
	where 
	$$\zeta_{N,p}(x):=(1- x)p + x N.$$
	The above reasoning can be easily extended to the multivariate B-spline vector $\mathcal{B}_{\bm{N},\bm{p}}(\bm{x})$ in \eqref{eq:vector-B-splines-TP} thanks to the inherent tensor-product structure. 
	
	Several vectors of type $\mathcal{B}_{N,p}(x)$ and of type $\mathcal{B}_{\bm{N},\bm{p}}(\bm{x})$ are involved in our network model.
	The precise knowledge of the locality of the supports of the B-splines may be exploited in an efficient implementation of the network.
	This can be regarded as a kind of ``natural dropout'' in the ExSpliNet model.
	
	Dropout is a standard technique to reduce the computational complexity of deep neural networks \cite{Srivastava}. However, randomly dropping out nodes during training seems artificial and has the drawback that additional hyperparameters (for example, the probability of a neuron being dropped out) need to be chosen and validated. The natural dropout in ExSpliNet does not suffer from such issues.

	\section{Explicit expressions for gradients} \label{sec:Gradients}
	We now briefly discuss the explicit computation of the gradient of an ExSpliNet function with respect to the input variables as well as the weight parameters. The simple differentiation formula for B-splines, see \eqref{eq:diff-Bspline}, allows us to derive explicit expressions for them.
	For ease of presentation but without loss of generality, we just consider $O = 1$ so that we can drop all superscripts $o$ in our notations. 
    All splines involved in our network model are differentiable for degrees at least two. 
	Therefore, in this section, we assume all $p_\ell\geq2$ and $q_\ell\geq2$. Note that for lower degrees we can still compute the right-hand derivatives according to \eqref{eq:diff-Bspline}. 
	
	Before going into the details of the gradient computation, let us first collect the functions of type \eqref{eq:decision-fun} for some $t$ into the vector
	\begin{equation*}%\label{eq:decision-vec}
	\Psi^{\bm{N},\bm{p}}_{\bm{V}^t} (\bm{x}) := \Bigl[\Psi^{N_1,p_1}_{\bm{V}^{t,1}}(\bm{x}), \ldots, \Psi^{N_L,p_L}_{\bm{V}^{t,L}}(\bm{x}) \Bigr].
	\end{equation*}
	Then, we can compactly write 
	$$E^{T,\bm{N},\bm{M},\bm{p},\bm{q}}_{\bm{V},\bm{W}} (\bm{x}) = \sum_{t=1}^{T}  \Phi^{\bm{M},\bm{q}}_{\bm{w}^{t}}\bigl(\Psi^{\bm{N},\bm{p}}_{\bm{V}^t}(\bm{x}) \bigr).$$
	We set $\bm{x} := [x_1, \ldots, x_D]$ and $\bm{y}_t:=[y_{t,1},\ldots,y_{t,L}]$ in the following.
	
	\subsection{Gradient with respect to the input variables}
	From Definition~\ref{def:ExSpliNet} and the differentiation formula in \eqref{eq:diff-Bspline} we immediately deduce that
	$$\frac{\dd }{\dd x_d} \Psi^{N_\ell,p_\ell}_{\bm{v}^{t,\ell,d}}(x_d) 
	%= \bar{\bm{v}}^{t,l,d}\cdot \mathcal{B}_{N_d-1, p_\ell-1}(x_d)
	=\Psi^{N_\ell-1,p_\ell-1}_{\bar{\bm{v}}^{t,\ell,d}}(x_d),$$
	for some values of $\bar{\bm{v}}^{t,\ell,d}$. Similarly, given $1\leq \ell\leq L$, we get
	$$\frac{\partial}{\partial y_{t,\ell}}\Phi^{\bm{M},\bm{q}}_{\bm{w}^{t}}(\bm{y}_t)
	=\Phi^{\bm{M}[\ell-],\bm{q}[\ell-]}_{\bar{\bm{w}}^{t}}(\bm{y}_t),$$
	for some values of $\bar{\bm{w}}^{t}$ and
	\begin{align*}
	\bm{M}[\ell-]&:=[M_1,\ldots,M_{\ell-1},M_\ell-1,M_{\ell+1},\ldots,M_L],\\
	\bm{q}[\ell-]&:=[q_1,\ldots,q_{\ell-1},q_\ell-1,q_{\ell+1},\ldots,q_L].
	\end{align*}
	Then, by applying the chain rule for derivatives, we can write the partial derivative of an ExSpliNet function with respect to the $d$-th input variable as
	\begin{align*}
	&\frac{\partial}{\partial x_d}E^{T,\bm{N},\bm{M},\bm{p},\bm{q}}_{\bm{V},\bm{W}} (\bm{x})\\
	&= \sum_{t=1}^{T}\sum_{\ell=1}^L \frac{\partial}{\partial y_{t,\ell}}\Phi^{\bm{M},\bm{q}}_{\bm{w}^{t}}\bigl( \Psi^{\bm{N},\bm{p}}_{\bm{V}^t} (\bm{x}) \bigr)\frac{\partial}{\partial x_d}\Psi^{N_\ell,p_\ell}_{\bm{V}^{t,\ell}}(\bm{x}) \\
	&= \sum_{t=1}^{T}\sum_{\ell=1}^L \Phi^{\bm{M}[\ell-],\bm{q}[\ell-]}_{\bar{\bm{w}}^{t}}\bigl( \Psi^{\bm{N},\bm{p}}_{\bm{V}^t} (\bm{x}) \bigr)
	\Psi^{N_\ell-1,p_\ell-1}_{\bar{\bm{v}}^{t,\ell,d}}(x_d).
	\end{align*}
	Since this expression is a composition of B-splines (actually very similar to an ExSpliNet function), only few of them are involved in the evaluation at a given $\bm{x}$, as explained in Section~\ref{sec:Evaluation}. Therefore, evaluation of a partial derivative of an ExSpliNet function is not expensive. It is clear that an analogous reasoning also holds for higher-order partial derivatives.
	
	Such knowledge is valuable, for example, in the implementation of PINNs for the solution of differential problems \cite{Lu,Raissi}. It avoids the need for using an automatic differentiation method.
	
	\subsection{Gradient with respect to the weight parameters}
	We now look at the derivatives with respect to the weight parameters. There are two types of such parameters: the inner and outer weights.
	Let us first consider the outer weights. Given $1\leq m\leq M_1\cdots M_L$ and $1\leq t\leq T$, it is clear that
	$$\frac{\partial}{\partial w^{t}_m}E^{T,\bm{N},\bm{M},\bm{p},\bm{q}}_{\bm{V},\bm{W}} (\bm{x}) = \Phi^{\bm{M},\bm{q}}_{\bm{e}^{m}}\bigl( \Psi^{\bm{N},\bm{p}}_{\bm{V}^t} (\bm{x}) \bigr),$$
	where $\bm{e}^{m}$ is the unit vector of length $M_1\cdots M_L$ with the value $1$ at the $m$-th position and $0$ elsewhere.
	
	The inner weights can be addressed as follows. Given $1\leq n_\ell\leq N_\ell$, $1\leq t\leq T$, $1\leq \ell\leq L$, and $1\leq d\leq D$, we deduce
	\begin{align*}
	&\frac{\partial}{\partial v^{t,\ell,d}_{n_\ell}}E^{T,\bm{N},\bm{M},\bm{p},\bm{q}}_{\bm{V},\bm{W}} (\bm{x}) \\
	&= \frac{\partial}{\partial y_{t,\ell}}\Phi^{\bm{M},\bm{q}}_{\bm{w}^{t}}\bigl( \Psi^{\bm{N},\bm{p}}_{\bm{V}^t} (\bm{x}) \bigr)\frac{\partial}{\partial v^{t,\ell,d}_{n_\ell}}\Psi^{N_\ell,p_\ell}_{\bm{V}^{t,\ell}}(\bm{x}) \\
	&= \Phi^{\bm{M}[\ell-],\bm{q}[\ell-]}_{\bar{\bm{w}}^{t}}\bigl( \Psi^{\bm{N},\bm{p}}_{\bm{V}^t} (\bm{x}) \bigr)
	\Psi^{N_\ell,p_\ell}_{\bm{e}^{n_\ell}}(x_d),
	\end{align*}
	where $\bm{e}^{n_\ell}$ is the unit vector of length $N_\ell$ with the value $1$ at the $n_\ell$-th position and $0$ elsewhere.
	
	Finally, we discuss the computation of the derivative of a loss function with respect to the weight parameters of ExSpliNet in a supervised learning environment. 
	Let $A := \{ (\bm{x}^1, y^1), \ldots, (\bm{x}^K, y^K) \}$ be a training dataset
	where $\bm{x}^k \in [0,1]^D$ and $y^k \in \R$.
	The empirical risk function over the training data is defined by
	\begin{equation} \label{eq:risk}
	\mathcal{E}(A) := \frac{1}{K} \sum_{k=1}^K \mathcal{F}\bigl(E^{T,\bm{N},\bm{M},\bm{p},\bm{q}}_{\bm{V},\bm{W}} (\bm{x}^k), y^k\bigr), 
	\end{equation}
	for a given loss function $\mathcal{F}(z,y)$. 
	For simplicity, in the following we consider the squared loss function $\mathcal{F}(z,y) := (z-y)^2$. For empirical risk minimization it is convenient to be able to compute the gradient with respect to the weight parameters.
	Let $w$ be any weight parameter, so it is either $w^{t}_m$ or $v^{t,\ell,d}_{n_\ell}$. Then, a direct calculation gives
	\begin{align*} 
	&\frac{\partial}{\partial w} \mathcal{E}(A) = \frac{1}{K} \sum_{k=1}^K \frac{\partial}{\partial w}\mathcal{F}\bigl(E^{T,\bm{N},\bm{M},\bm{p},\bm{q}}_{\bm{V},\bm{W}} (\bm{x}^k), y^k\bigr)\\
	&= \frac{2}{K} \sum_{k=1}^K \bigl(E^{T,\bm{N},\bm{M},\bm{p},\bm{q}}_{\bm{V},\bm{W}} (\bm{x}^k)- y^k\bigr) \frac{\partial}{\partial w}E^{T,\bm{N},\bm{M},\bm{p},\bm{q}}_{\bm{V},\bm{W}} (\bm{x}^k),	  
	\end{align*}
	where we can simply plug in the explicit expressions for the derivatives of the ExSpliNet function described before.

	\section{Interpretation of the model}\label{sec:Interpretation}
	Let us fix $\bm{x} := [x_1, \ldots, x_D]$ and
	$$ \bm{f} (\bm{x}) := [f_1(\bm{x}),\ldots, f_O(\bm{x})].$$ 
	In the classical low-dimensional setting, it is known that each $f_o(\bm{x})$ can be efficiently approximated by means of a standard tensor-product B-spline structure, just like in the BSNN model \cite{Harris}. However, for high-dimensional data, where $D\gg1$, this is infeasible due to the excessive complexity of the tensor-product structure --- it is exponential in $D$. 

	In order to address this issue, we assume that the considered high-dimensional data belong to a lower-dimensional manifold. This is a core assumption by a variety of methods that aim at manifold learning and linear and nonlinear dimensionality reduction \cite{Andras,Hastie2}.
	Under this assumption, any component of the function $\bm{f}$ might be well approximated as
	$$ f_o(\bm{x}) \sim \sum_{t =1}^T f_{o,t}( y_{t,1}(\bm{x}), \ldots, y_{t,L}(\bm{x})),$$
	with $L \ll D$. Here, the vector of inner functions $\bm{y}_t: \R^D \to \R^L$, 
	$$\bm{x} \to \bm{y}_t(\bm{x}) := [y_{t,1}(\bm{x}), \ldots, y_{t,L}(\bm{x})],$$
	plays the role of \emph{feature extractor} that reduces the dimensionality and permits, as a next step, a standard tensor-product spline approximation $f_{o,t}$ as outer function.
	This is the main idea behind ExSpliNet. 
	
	\subsection{Inner functions}\label{sec:Interpretation-inner}
	The functions computed by the inner neural networks $\INN{\ell}$, $\ell=1,\ldots,L$, can be seen as feature extractors for dimensionality reduction and take the general form of an additive spline model:
	\begin{equation}\label{eq:inner_output}
	y_{t,\ell}(\bm{x}) := \Psi^{N_\ell,p_\ell}_{\bm{V}^{t,\ell}}(\bm{x}) = \sum_{d=1}^D \Psi^{N_\ell,p_\ell}_{\bm{v}^{t,\ell,d}}(x_d).	  
	\end{equation}
	This general form prepares the data to be further processed by tensor-product spline functions. In the following, we discuss some very particular instances of the inner neural network model.
	
	If $D$ is already small, then we can take $L = D$ and set
	$$ \Psi^{N_\ell,p_\ell}_{\bm{V}^{t,\ell}}(\bm{x}) = x_\ell, \quad \ell=1,\ldots,D,$$
	for any choice of $N_\ell>p_\ell\geq1$; see Proposition~\ref{pro:decision-vec-id} and Remark~\ref{rmk:decision-vec-id} for details.
	In such case, the input data are directly propagated to the outer neural network without modification.
	More generally, even if $D$ is not small, we can set 
	$$ \Psi^{N_\ell,p_\ell}_{\bm{V}^{t,\ell}}(\bm{x}) = x_{\sigma_t(\ell)},$$
	where
	$$\sigma_t: \{1, \ldots, L\} \to \{1, \ldots, D\}$$
	selects $L$ indices from $\{1, \ldots, D\}$ with $L\leq D$.
	In other words, each $f_{o,t}$ is fed with a subset of the input variables.
	
	Increasing the complexity, another interesting special case is a convex combination of the input variables. This can again be achieved for any choice of $N_\ell>p_\ell\geq1$. In particular, in the vein of Remark~\ref{rmk:decision-vec-id}, by choosing $p_\ell=1$ and $N_\ell=2$, we can set for some $\nu^{t,\ell,d}\geq0$ such that $\nu^{t,\ell,1}+\dots+\nu^{t,\ell,D}=1$,
	$$\Psi^{N_\ell,p_\ell}_{\bm{V}^{t,\ell}}(\bm{x}) = \sum_{d=1}^D\nu^{t,\ell,d} B_{2,1,2}(x_d) = \sum_{d=1}^D \nu^{t,\ell,d} x_d.$$
	
	We can also impose other kinds of constraints on the structure of $\Psi^{N_\ell,p_\ell}_{\bm{V}^{t,\ell}}(\bm{x})$ based on the knowledge of the particular problem to be approximated. 
	For example, we can construct a convolutional operator, similar to the classical convolutional layer, by imposing additional sparsity on the matrices $\bm{V}^{t,\ell}$, for addressing image related tasks. 
	
	Finally, we note that a large class of neural networks (including feed-forward ReLU networks and convolutional neural networks) can be interpreted as a more general additive linear spline model; see \cite{Balestriero2}. The latter spline model, however, might be highly unstructured which complicates the analysis and negatively affects the final performance of the model as observed in \cite{Balestriero2}. 
	 
	\subsection{Outer functions}\label{sec:Interpretation-outer}
	Let $\bm{y}_t:=[y_{t,1},\ldots,y_{t,L}]$, $t=1,\ldots,T$, be the low-dimensional features found using the layer of inner neural networks $\INN{\ell}$, $\ell=1,\ldots,L$; see \eqref{eq:inner_output}.	
	They are passed to the outer neural network $\ONN$ that computes the functions
	$$ f_{o,t}(\bm{y}_t) : = \Phi^{\bm{M},\bm{q}}_{\bm{w}^{o,t}}(\bm{y}_t) = \bm{w}^{o,t}\cdot \mathcal{B}_{\bm{M},\bm{q}}(\bm{y}_t),$$
	for $o=1,\ldots,O$ and $t=1,\ldots,T$, where
	$$ \mathcal{B}_{\bm{M},\bm{q}}\big( \bm{y}_t \big) = \bigotimes_{\ell=1}^L \mathcal{B}_{M_\ell,q_\ell} \big(  y_{t,\ell}(\bm{x}) \big)$$
	has components of the form
	$$ 	\prod_{\ell=1}^L B_{M_\ell,q_\ell,m_\ell}\bigl( y_{t,\ell}(\bm{x})\bigr), $$ 
	for $m_\ell=1,\ldots,M_\ell$ and $\ell=1,\ldots,L$.
	
	The multivariate B-spline vector $\mathcal{B}_{\bm{M},\bm{q}}\big(\bm{y}_t\big)$ can be interpreted as a fuzzy hierarchical partition of the domain that induces a tree structure with $L$ levels for every $t=1,\ldots,T$. This can be explained as follows.
	Let us fix $t$. For each $\ell=1,\ldots,L$, by construction,
	$\mathcal{B}_{M_\ell,q_\ell} \big( y_{t,\ell}(\bm{x}) \big)$
	is a vector of $M_\ell$ B-splines of degree $q_\ell$.
	Its components are nonnegative real values that sum up to one, and thus can be regarded as a distribution over a discrete set of hidden classes $\{c_{\ell,1},\ldots,c_{\ell,M_\ell} \}$ at level $\ell$,
	where for $m_\ell=1,\ldots,M_\ell$ we have
	$$ \mathds{P}( \bm{x} \in c_{\ell,m_\ell}) = B_{M_\ell,q_\ell,m_\ell}\bigl( y_{t,\ell}(\bm{x}) \bigr). $$
	The B-spline $B_{M_\ell,q_\ell,m_\ell}$ plays the role of \textit{decision} or \textit{gating function} at level $\ell$ based on the feature $y_{t,\ell}(\bm{x})$.
    Then, under the assumption that the events are mutually independent, the joint probability on the hierarchy of hidden classes at all levels is given by
	\begin{align*}
	&\mathds{P}(\bm{x} \in \mathcal{C}_{m_1,\ldots,m_L})
	:= \mathds{P}(\bm{x} \in c_{1,m_1} , \ldots, \bm{x} \in c_{L,m_L}) \\
	&\quad = \prod_{\ell=1}^L \mathds{P}( \bm{x} \in c_{\ell,m_\ell})
	= \prod_{\ell=1}^L B_{M_\ell,q_\ell,m_\ell}\bigl( y_{t,\ell}(\bm{x})\bigr),
	\end{align*}
	for $m_\ell=1,\ldots,M_\ell$ and $\ell=1,\ldots,L$.
	All together they form the multivariate B-spline vector
	$\mathcal{B}_{\bm{M},\bm{q}}\big(\bm{y}_t\big)$.
	A graphical representation of the induced tree structure can be found in Figure~\ref{fig:tree}.
	
	\begin{remark}
	The above is just a theoretical way to interpret the tensor product combined with the nonnegativity and the partition-of-unity property of B-splines. In practice, there is no reason why the empirical data should reflect the independence assumption used to factorize the joint probability on the hierarchy of hidden classes.
	However, such an assumption is rather common in many well-known models such as Naive Bayes models.
	\end{remark}
	
	\begin{figure}[h!]
		\centering
		\subfigure{\includegraphics[scale=0.2]{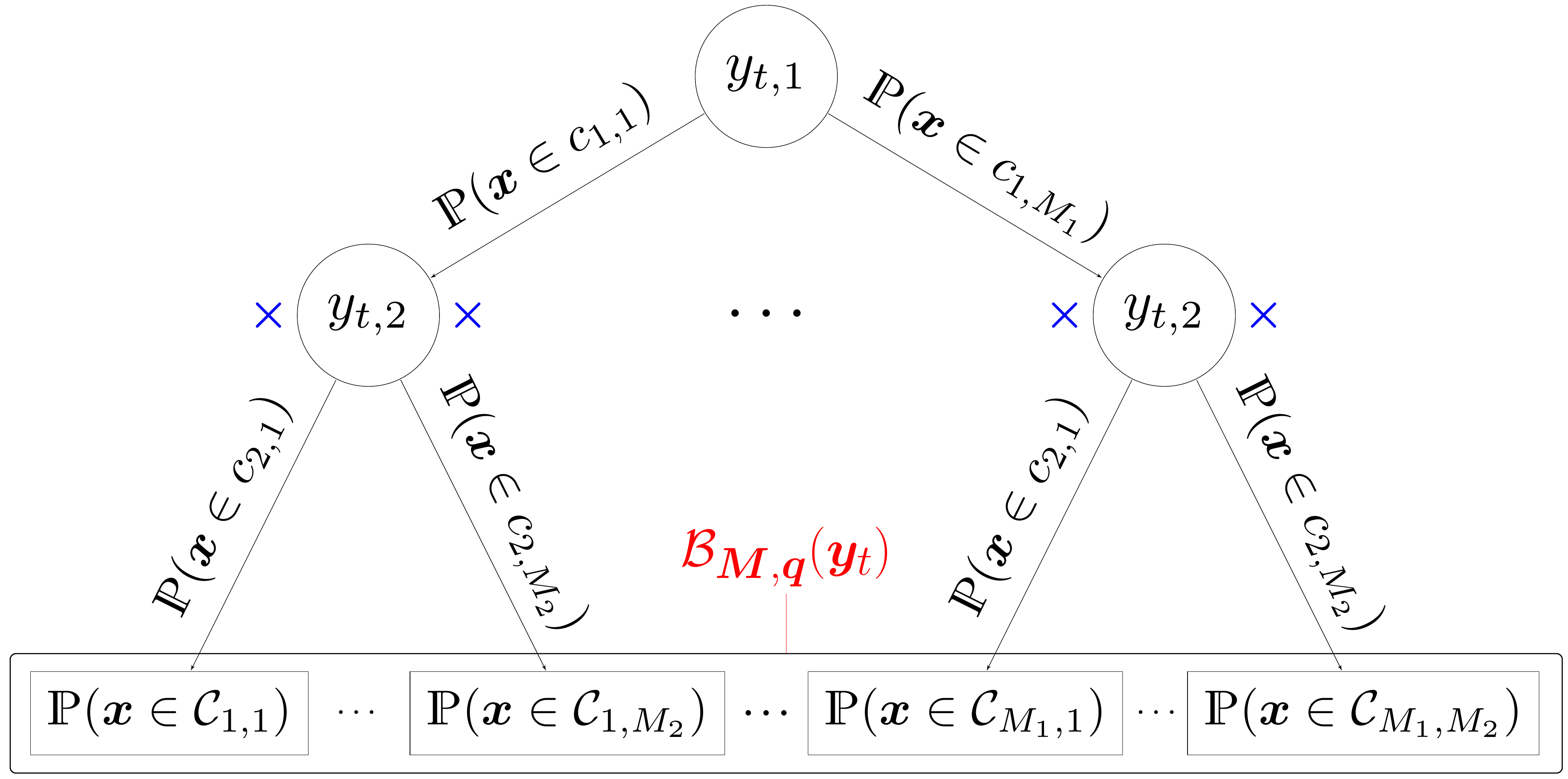}}
	\caption{A graphical representation of a tree structure (consisting of $L=2$ levels) in the outer neural network of ExSpliNet; see Figure~\ref{fig:architecture}(b).}
	\label{fig:tree}
	\end{figure}
	
	Given the probabilistic interpretation of $\mathcal{B}_{\bm{M},\bm{q}}(\bm{y}_t)$, the function $f_{o,t} (\bm{y}_t)$ is simply a weighted sum of those probabilities, i.e.,
	\begin{align*}
	&f_{o,t}(\bm{y}_t) = \bm{w}^{o,t}\cdot \mathcal{B}_{\bm{M},\bm{q}}(\bm{y}_t)\\
	&\quad = \sum_{m_1=1}^{M_1} \cdots \sum_{m_L=1}^{M_L} w^{o,t}_{m_1,\ldots,m_L} \mathds{P}(\bm{x} \in \mathcal{C}_{m_1,\ldots,m_L}).
	\end{align*}
	Furthermore, we could impose a stochasticity condition on the vectors $\bm{w}^{o,t}$ such that $w^{o,t}_{m_1,\ldots,m_L}$ represents the probability of the output class $o$ for the hierarchical hidden class $\mathcal{C}_{m_1,\ldots,m_L}$, and we end up with a final probability distribution and a mixture model for general classification problems. 
	
	The final output of ExSpliNet is computed as a superposition of the functions $f_{o,t}(\bm{y}_t)$. 
	In other words, ExSpliNet can be regarded as an ensemble of probabilistic trees induced by this fuzzy hierarchical partitions of $[0,1]^D$, based on the features $y_{t,\ell}(\bm{x})$.

	As a further step, it could be interesting to find a semantics for this partition. It could enforce interaction between rule based ontology representations like knowledge graphs and machine learning approaches similar to the mutual synergistic interaction proposed in \cite{Bellomarini}. 
	
	We conclude this section with some final observations. The tree structure described in this section is a probabilistic generalization of the classical regression tree model.
	Indeed, if we take $\bm{q} = [0,\ldots,0]$, then by the definition of $\mathcal{B}_{\bm{M},\bm{q}}$ we have
	\begin{align*}
	&f_{o,t}(\bm{y}_t) %= \bm{w}^{o,t}\cdot \mathcal{B}_{\bm{M},\bm{q}}(\bm{y}_t)\\
	 = \sum_{m_1=1}^{M_1} \cdots \sum_{m_L=1}^{M_L} w^{o,t}_{m_1,\ldots,m_L} \mathds{1}_{m_1,\ldots,m_L} (\bm{y}_t),
	\end{align*}
	where $\mathds{1}_{m_1,\ldots,m_L}$ is the indicator function on the $L$-dimensional hypercube where the constant B-spline indexed by $m_1,\ldots,m_L$ is nonzero.
	Furthermore, if $M_1 = \cdots = M_L = 2$, then the tree is binary. Finally, when taking
	$y_{t,\ell}(\bm{x}) = x_{\sigma_t(\ell)}$, we obtain an \emph{orthogonal regression tree}, while taking $y_{t,\ell}(\bm{x})$ as a convex combination of the input variables $\bm{x}$ results in an \emph{oblique regression tree}.
	
	As mentioned in the introduction, the ExSpliNet model is a feasible generalization of the BSNN model \cite{Harris} towards high-dimensional data.
	Therefore, the interpretation carried out in this section can be seen as a generalization of the fuzzy sets presented in \cite{Harris} and in \cite{WangL} where the BSNN model was applied to extract fuzzy rules for centrifugal pump monitoring.	
	One of the main differences is the addition of the inner networks. This allows for the extraction of $L$-dimensional features $\bm{y}_t$ and the application of the model even with high input dimension $D$. 
	Moreover, since we interpreted the BSNN fuzzy partition as a tree structure and ExSpliNet allows for an ensemble of these trees, we can see it as a \textit{fuzzy forest of generalized BSNNs}.

	\section{Example: application to the Iris dataset}\label{sec:Interpretation-iris}
	As illustration we apply the ExSpliNet model to the classical Iris dataset \cite{Fisher}, one of the best known datasets in the pattern recognition literature. The dataset consists of $50$ samples from each of three species of Iris flowers (Iris setosa, Iris versicolor, and Iris virginica). Four features were measured from each sample: the length and the width of the sepals and the petals. By combining these four features, the task is to distinguish the species from each other.
	
	Let us denote the $D = 4$ input features as follows: 
	\begin{itemize}
		\item $x_1$ the sepal length, 
		\item $x_2$ the sepal width,
		\item $x_3$ the petal length,
		\item $x_4$ the petal width.
	\end{itemize}
	We normalize the data so that each value belongs to the interval $[0,1]$.
	Our objective is to train a model that, given an unseen $\bm{x} = [x_1, x_2, x_3, x_4]$, leads to an output in $\R^O$ with $O=3$, where each output component is the probability of belonging to one of the three Iris species. Let us assume that
	\begin{itemize}
		\item $o=1$ stands for setosa,
		\item $o=2$ stands for versicolor,
		\item $o=3$ stands for virginica. 
	\end{itemize} 
	To accomplish this classification task we use a simple configuration of the ExSpliNet model specified by the parameters $T = 1$, $L = 2$, $p_\ell = q_\ell = 1$, $N_\ell = 2$ for $ \ell = 1,2$, and $M_1 = 2$, $M_2 =3$. In other words, for each $o = 1,2,3$, we consider the ExSpliNet function $E^{1,[2,2],[2,3],[1,1],[1,1]}_{\bm{V},\bm{W}^{o}}$ given by
	$$\Phi^{{[2,3]},{[1,1]}}_{\bm{w}^{o,1}}\Biggl( \biggl[\sum_{d=1}^4 \Psi^{2,1}_{\bm{v}^{1,\ell,d}}(x_d) \biggr]_{\ell=1,2} \Biggr).$$
	 %where we omitted the subscript $t$ since there is just one tree ($T = 1$).
	 We train this very simple configuration on 120 samples to learn the 
	 $$D T \sum_{\ell=1}^{L} N_\ell + O T \prod_{\ell=1}^L M_\ell = 34$$ 
	 weight parameters, namely
	 \begin{itemize}
	 	\item $ \bm{v}^{1,\ell,d} \in \R^2$  with $\ell=1,2$ and $d=1,2,3,4$,
	 	\item $ \bm{w}^{o,1} \in \R^{2\cdot3} = \R^6$  with $o=1,2,3$.
	 \end{itemize}
 	A test accuracy of $96.7\%$ is obtained on the remaining 30 samples.
 	
 	Now that we have a working model, we can interpret it as discussed in Section~\ref{sec:Interpretation}.
 	First of all, let us look at the new features, as in \eqref{eq:inner_output}, extracted by the inner functions:
 	$$y_\ell({\bm x}) = \sum_{d=1}^4 \Psi^{2,1}_{\bm{v}^{1,\ell,d}}(x_d), \quad \ell = 1,2, $$
 	where
	\begin{align*}
	\Psi^{2,1}_{\bm{v}^{1,\ell,d}}(x_d) 
	&= v^{1,\ell,d}_1 B_{2,1,1}(x_d) + v^{1,\ell,d}_2 B_{2,1,2}(x_d) \\
	&= v^{1,\ell,d}_1(1-x_d) + v^{1,\ell,d}_2 x_d.
	\end{align*}
	After inspecting the components of the vectors ${\bm{v}^{1,\ell,d}}$, we observe that $v_2^{1,1,3} = 0.491$,
	$v_2^{1,1,4} = 0.488$, and $v_2^{1,2,4} = 0.998$,
	while all other components are smaller than $10^{-2}$. Thus, we can approximately write the general expressions of the new features $y_1, y_2$ as
	$$y_1({\bm x}) \backsimeq (x_3 + x_4)/2 \text{  and  }  y_2({\bm x})  \backsimeq x_4.$$ 
	Figure~\ref{fig:irisFeatures} depicts the training points in terms of the new features extracted.
	
	\begin{figure}[h!]
		\centering
		\subfigure{\includegraphics[trim={1cm 0.7cm 1cm 1cm}, clip, height=5.0cm]{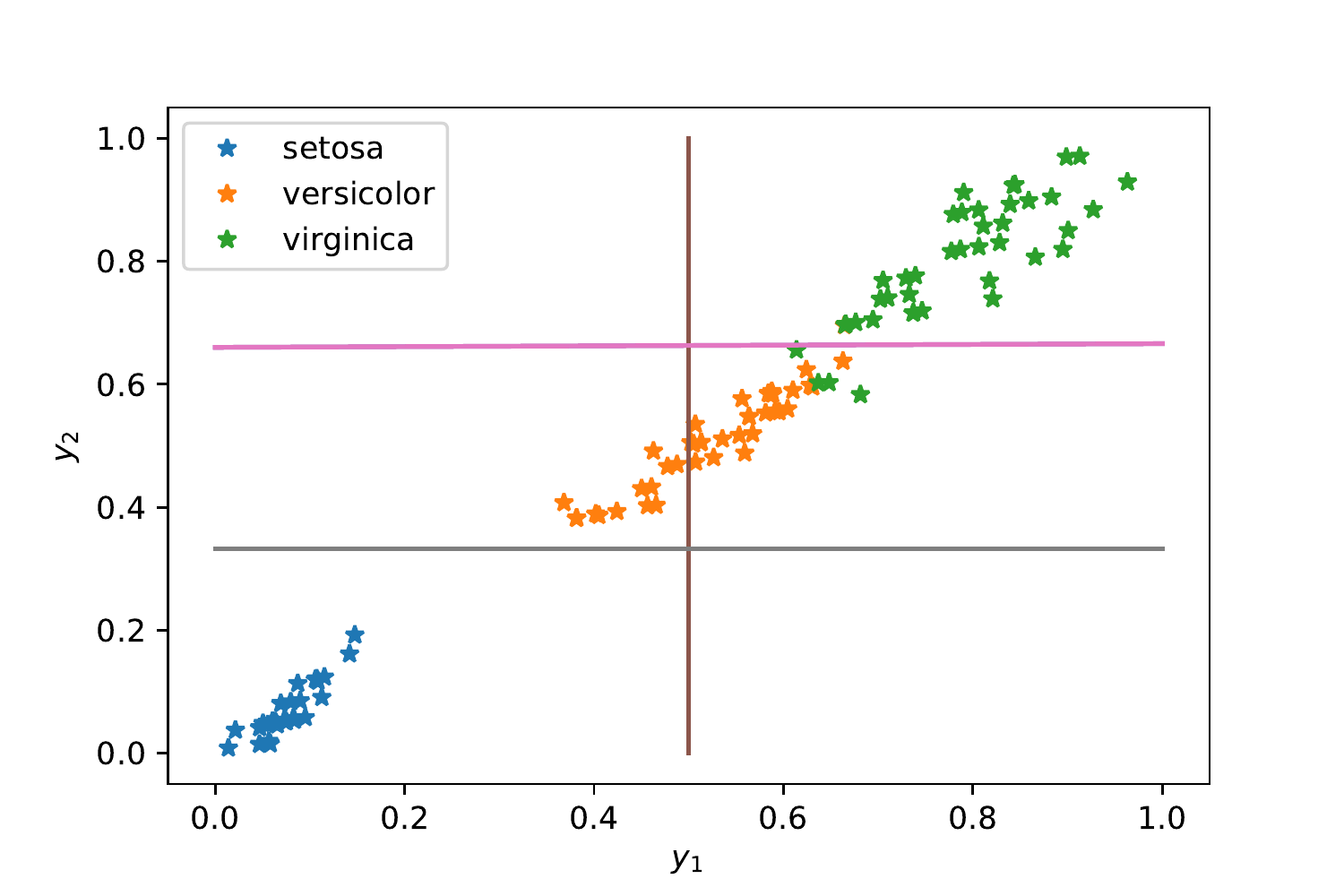}}
		\caption{Visualization of the Iris training points in terms of the new features extracted. The horizontal axis corresponds to $y_1({\bm x}) \backsimeq (x_3+x_4)/2$ and the vertical axis to $y_2({\bm x}) \backsimeq x_4$.}
		\label{fig:irisFeatures}
	\end{figure}
	
	Then, these new features are passed to the tree structure as shown in Figure~\ref{fig:tree}. In this case, the tree has $M_1 = 2$ children at the first level and $M_2 = 3$ children at the second level per tree node, resulting in a total of $M_1 \cdot M_2 = 6$ hierarchical hidden classes.
	
	The probability of belonging to the classes of the first level depends on the value of $y_1({\bm x})$:
	\begin{align*}
	\mathds{P}(\bm{x} \in c_{1,1}) &= B_{2,1,1}(y_1({\bm x})) = 1-y_1({\bm x}), \\ 
	\mathds{P}(\bm{x} \in c_{1,2}) &= B_{2,1,2}(y_1({\bm x})) = y_1({\bm x}).
	\end{align*}
	In other words, a point $\bm{x}$ is more likely to belong to the first class of the first level $c_{1,1}$, the more the new feature $y_1(\bm{x})$ lies at the left of the vertical line in Figure~\ref{fig:irisFeatures}. Otherwise, $\bm{x}$ is more likely to belong to the second class $c_{1,2}$.
	
	Similarly, one can compute
	\begin{align*}
	\mathds{P}(\bm{x} \in c_{2,1}) &= B_{3,1,1}(y_2({\bm x})), \\ 
	\mathds{P}(\bm{x} \in c_{2,2}) &= B_{3,1,2}(y_2({\bm x})), \\
	\mathds{P}(\bm{x} \in c_{2,3}) &= B_{3,1,3}(y_2({\bm x})),
	\end{align*}
	in order to obtain the probability of belonging to one of the three children of the second level. Thus, looking at the horizontal lines in Figure~\ref{fig:irisFeatures}, one can understand the most likely class based on the value of the second feature $y_2({\bm x})$.
	Recall that
	\begin{align*}
	 B_{3,1,1}(y) &= \begin{cases}
		1-2y, & y\in[0,1/2),\\
		0 & y\in[1/2,1],
    \end{cases}\\
	 B_{3,1,2}(y) &= \begin{cases}
		2y, & y\in[0,1/2),\\
		2-2y, & y\in[1/2,1],
    \end{cases}\\
	 B_{3,1,3}(y) &= \begin{cases}
		0 & y\in[0,1/2),\\
		2y-1, & y\in[1/2,1].
    \end{cases}    
	\end{align*}

	The trained weights $\bm{w}^{o,1} \in \R^6$ with $o=1,2,3$ give, for each hierarchical hidden class, the probability of the Iris species (related to $o$) in that class. 
	
	Finally, we compute 
	\begin{align*}
	f_{o}(y_1, y_2) %= \bm{w}^{o,1}\cdot \mathcal{B}_{\bm{M},\bm{q}}([y_1, y_2]) 
	=\sum_{m_1=1}^{2} \sum_{m_2=1}^{3} w^{o,1}_{m_1,m_2} \mathds{P}(\bm{x} \in \mathcal{C}_{m_1,m_2}).
	\end{align*}
	The prediction of a new observation is performed by choosing the most probable Iris species,
	$${\underset{o = 1,2,3}{\arg\max}}\ f_{o}.$$
	
	This can be rephrased as a set of probabilistic rules of the form
	\begin{itemize}
		\item $[w^{o,1}_{m_1,m_2}] : (\bm{x} \in c_{1,m_1}) \wedge (\bm{x} \in  c_{2,m_2}) \Rightarrow {\bm x} \text{  is  } o$;
		\item $[\mathds{P}(\bm{x} \in c_{1,m_1})] : \bm{x} \in c_{1,m_1}$;
		\item $[\mathds{P}(\bm{x} \in c_{2,m_2})] : \bm{x} \in c_{2,m_2}$.
	\end{itemize}
	Here, the rule is formulated after the colon and its probability is given in square brackets before the colon.
	Examples are
	\begin{itemize}
		\item $ [1.0] : (\bm{x} \in c_{1,1}) \wedge (\bm{x} \in  c_{2,1}) \Rightarrow \bm{x} \text{  is setosa}$, 
		or, approximately, if $(x_3 + x_4)/2 < 1/2$ and $x_4 < 1/3$ then $\bm{x}$ is setosa.
		\item  $[0.9] : (\bm{x} \in c_{1,2}) \wedge (\bm{x} \in  c_{2,2}) \Rightarrow \bm{x} \text{  is  versicolor}$,
		or, approximately, 
		if $(x_3 + x_4)/2 > 1/2$ and $1/3 < x_4 < 2/3 $ then $\bm{x}$ is versicolor.
	\end{itemize}
	Such a probabilistic ontology can then be used in practice by modern probabilistic ontology solvers to query not only the final decision but also the reasoning behind it by means of the path of rules used to infer the final decision. For example,
	\begin{itemize}
        \item $\bm{x} = [x_1, x_2, x_3, x_4] $ is a setosa, because
			its petal length $x_3$ and width $x_4$ are small.
	\end{itemize}
	
	The Iris dataset is a simple example that allows us to analyze it by manually looking at the weight parameters. The described method, however, can be automated to tackle more complicated tasks.
	In such cases a single tree may not be sufficient but pruning techniques may be applied to reduce their number while controlling the trade-off between simplicity (interpretability) and accuracy (expressivity) of the model.

	\section{Expressive power results}\label{sec:Approximation}
	In this section we focus on the expressive power of the proposed network model.
	We show two universal approximation results, the first based on the KST and the second on multivariate spline theory. 
    Let us again consider $O = 1$ for simplicity of presentation but without loss of generality, so that we can drop all superscripts $o$ in our notations. 
	Let $\|\cdot\|_k$ denote the standard $L_k$-norm on the unit domain $[0,1]^D$ for some $1\leq k \leq \infty$ and some $D\geq1$.
	
	\subsection{Kolmogorov approximation}\label{sec:Approx-Kolmogorov}
	Kolmogorov's representation \eqref{eq:KST} can be simulated by means of an ExSpliNet function with univariate splines as inner and outer functions.
	Indeed, for $T = 2D+1$ and $L = 1$, we have
	\begin{equation*} %\label{eq:kolmogorov-spline}
	E^{2D+1,\bm{N},\bm{M},\bm{p},\bm{q}}_{\bm{V},\bm{W}} (\bm{x}) = 
	\sum_{t=1}^{2D+1} \Phi^{\bm{M},\bm{q}}_{\bm{w}^{t}}\biggl( \sum_{d=1}^D \Psi^{N_1,p_1}_{\bm{v}^{t,1,d}}(x_d)\biggr).
	\end{equation*}
	Note that $\bm{N}=[N_1]$, $\bm{M}=[M_1]$, $\bm{p}=[p_1]$, $\bm{q}=[q_1]$, and $\Phi^{\bm{M},\bm{q}}_{\bm{w}^{t}}$ is just a univariate spline. 
	Due to the fact that polynomials, and a fortiori splines, are dense in the space of continuous functions, we arrive at the following approximation theorem.
	
	\begin{theorem}\label{thm:kolmogorov-approx}
	Let $f: [0,1]^D\to \R$ be a continuous function. For any $\epsilon >0$, there exists an ExSpliNet function such that  
	$$\bigl\| f-E^{2D+1,\bm{N},\bm{M},\bm{p},\bm{q}}_{\bm{V},\bm{W}} \bigr\|_k \leq \epsilon,$$
	for any $1\leq k\leq\infty$.
	\end{theorem}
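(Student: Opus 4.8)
The plan is to feed Kolmogorov's representation \eqref{eq:KST} into the $L=1$, $T=2D+1$ instance of $\ExSpliNet$ and replace the (abstract) inner and outer functions by splines. The only genuine wrinkle is that an $\ExSpliNet$ function is required to satisfy the coefficient constraint \eqref{eq:V-condition}, so that the argument fed to each outer spline stays in $[0,1]$; a plain best/polynomial approximation of an inner function need not respect this. I would resolve this by (i) first rescaling the Kolmogorov data so that each inner summand is nonnegative and the inner sums map $[0,1]^D$ into $[0,1]$, and (ii) approximating the inner functions by the positivity-preserving Schoenberg variation-diminishing spline operator, whose B-spline coefficients are just sampled function values and therefore automatically obey \eqref{eq:V-condition}.

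First, the rescaling. By \eqref{eq:KST} write $f(\bm x)=\sum_{i=1}^{2D+1}\Phi_i(g_i(\bm x))$ with $g_i(\bm x):=\sum_{d=1}^{D}\Psi_{i,d}(x_d)$ and $\Psi_{i,d},\Phi_i\in C([0,1])$. Put $\hat\Psi_{i,d}:=\Psi_{i,d}-\min_{[0,1]}\Psi_{i,d}\ge 0$ and $c_i:=\sum_{d}\min_{[0,1]}\Psi_{i,d}$, so $g_i=c_i+\hat g_i$ with $\hat g_i:=\sum_d\hat\Psi_{i,d}\ge 0$. Since the variables separate, $R_i:=\max_{[0,1]^D}\hat g_i=\sum_{d}\max_{[0,1]}\hat\Psi_{i,d}$, and $\hat g_i$ maps $[0,1]^D$ onto $[0,R_i]$. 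For indices with $R_i=0$ the term $\Phi_i\circ g_i$ is the constant $\Phi_i(c_i)$, which I would absorb into the outer spline of another term (adding a constant costs nothing). For $R_i>0$ set $\check\Psi_{i,d}:=\hat\Psi_{i,d}/R_i$ and $\check\Phi_i(s):=\Phi_i(R_i s+c_i)$; then $\check\Psi_{i,d}\ge 0$, $\sum_d\max_{[0,1]}\check\Psi_{i,d}=1$, the map $\check g_i:=\sum_d\check\Psi_{i,d}$ sends $[0,1]^D$ into $[0,1]$, $\check\Phi_i\in C([0,1])$, and $f=\sum_{i=1}^{2D+1}\check\Phi_i\circ\check g_i$.

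Next, the two approximations. Fix $p_1=q_1=1$ (any fixed degrees $\ge 1$ work). For the inner functions, for $N_1>p_1$ take the variation-diminishing spline $\tilde\Psi_{i,d}:=\sum_{n=1}^{N_1}\check\Psi_{i,d}(\xi^{*}_{N_1,p_1,n})\,B_{N_1,p_1,n}$ with the Greville abscissae of \eqref{eq:greville}; because the B-splines are nonnegative and sum to one, each coefficient lies in $[0,\max_{[0,1]}\check\Psi_{i,d}]$, so choosing $\bm v^{i,1,d}$ to be this coefficient vector satisfies \eqref{eq:V-condition} with $\nu^{i,1,d}=\max_{[0,1]}\check\Psi_{i,d}$, and Proposition~\ref{pro:decision-fun-range} guarantees $0\le\tilde g_i:=\sum_d\tilde\Psi_{i,d}\le 1$ on $[0,1]^D$. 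The classical convergence estimate for this operator gives $\|\check\Psi_{i,d}-\tilde\Psi_{i,d}\|_\infty\to 0$ as $N_1\to\infty$, hence $\|\check g_i-\tilde g_i\|_\infty\le\sum_{d}\|\check\Psi_{i,d}-\tilde\Psi_{i,d}\|_\infty\to 0$. For the outer functions there is no constraint, so I would use the uniform density of degree-$q_1$ splines in $C([0,1])$ to obtain, for $M_1>q_1$, splines $\tilde\Phi_i=\Phi^{\bm M,\bm q}_{\bm w^{i}}$ with $\|\check\Phi_i-\tilde\Phi_i\|_\infty$ arbitrarily small. Setting $E:=\sum_{i=1}^{2D+1}\tilde\Phi_i\circ\tilde g_i=\ExSpliNet^{2D+1,\bm N,\bm M,\bm p,\bm q}_{\bm V,\bm W}$ with the weights just constructed then gives a bona fide $\ExSpliNet$ function satisfying \eqref{eq:decision-fun-range}.

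Finally, the error estimate. For each $i$, using that $\tilde g_i$ maps into $[0,1]$, $\|\check\Phi_i\circ\check g_i-\tilde\Phi_i\circ\tilde g_i\|_\infty\le\|\check\Phi_i\circ\check g_i-\check\Phi_i\circ\tilde g_i\|_\infty+\|\check\Phi_i-\tilde\Phi_i\|_\infty$. The first summand is bounded by the modulus of continuity of $\check\Phi_i$ evaluated at $\|\check g_i-\tilde g_i\|_\infty$, so choosing $N_1$ large enough makes it $<\epsilon/(2(2D+1))$ for every $i$, and then choosing $M_1$ large enough makes the second summand $<\epsilon/(2(2D+1))$ for every $i$; summing over $i$ yields $\|f-E\|_\infty\le\epsilon$. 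Since $[0,1]^D$ has unit measure, $\|f-E\|_k\le\|f-E\|_\infty\le\epsilon$ for every $1\le k\le\infty$, which is the claim. The only part requiring real care is the one highlighted at the outset — simultaneously keeping $\tilde g_i([0,1]^D)\subseteq[0,1]$ and the inner weights within \eqref{eq:V-condition} — and this is exactly what the nonnegativity and partition-of-unity of B-splines provide through the variation-diminishing operator; the density of splines in $C([0,1])$ and the telescoping estimate are routine.
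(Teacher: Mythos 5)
Your proposal is correct and follows essentially the same route as the paper's proof: apply the KST, approximate the inner functions by the Schoenberg (variation-diminishing) spline operator so that the B-spline coefficients automatically respect the constraint \eqref{eq:V-condition}, approximate the outer functions by density of splines in $C([0,1])$, and close with the telescoping estimate via uniform continuity. The only cosmetic difference is where the normalization happens --- you rescale the exact Kolmogorov inner and outer functions up front so the Schoenberg coefficients satisfy \eqref{eq:V-condition} directly, whereas the paper applies a post-hoc scaling $\chi_t$ to the spline coefficients and compensates inside the outer spline; your ordering (fixing the modulus of continuity of the exact rescaled outer function before choosing $N_1$) is, if anything, slightly cleaner.
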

	\begin{proof}
	Let us fix an arbitrary $\epsilon>0$. The KST states that there exists a decomposition of the form
	$$ f(\bm{x}) = \sum_{t = 1}^{2D + 1} \Phi_t\biggl( \sum_{d=1}^{D} \Psi_{t,d}(x_d) \biggr).$$
	In the following, we will construct an ExSpliNet function that approximates this decomposition with accuracy $\epsilon$.
	
	Given any $\delta_t>0$, by the continuity of $\Psi_{t,d}$ on $[0,1]$ and the density of splines in $C([0,1])$ there exists a spline $\widetilde\Psi^{N_1,p_1}_{\bm{v}^{t,1,d}}$ with $p_1\geq1$  (without loss of generality we can assume the same $N_1, p_1$ for each $t, d$) such that
	$$ \| \Psi_{t,d} - \widetilde\Psi^{N_1,p_1}_{\bm{v}^{t,1,d}} \|_\infty \leq \frac{\delta_t}{D}$$
	and
	$$\operatorname{range}\bigl(\widetilde\Psi^{N_1,p_1}_{\bm{v}^{t,1,d}}\bigr)\subseteq\operatorname{range}\bigl(\Psi_{t,d}\bigr)\subseteq[0,1].$$
	Specifically, this can be accomplished by means of the Schoenberg spline operator \cite{Lyche,Marsden}, i.e., a spline where the weights take the form
	$$v^{t,1,d}_{n_1}=\Psi_{t,d}(\xi^*_{N_1,p_1,n_1}).$$
	Recall that the $\xi^*_{N_1,p_1,n_1}$ are the Greville abscissae defined in \eqref{eq:greville}.
	We define the scaling function
	$$\chi_t(x):=x \Big / \max\biggl(1,\sum_{d=1}^{D} \sum_{n_1=1}^{N_1} v^{t,1,d}_{n_1}\biggr),$$
	and we set
	$$\Psi^{N_1,p_1}_{\bm{v}^{t,1,d}}(x_d):=\chi_t\bigl(\widetilde\Psi^{N_1,p_1}_{\bm{v}^{t,1,d}}(x_d)\bigr).$$
	Note that the weights of the above spline are given by $\chi_t(v^{t,1,d}_{n_1})$ and thus satisfy the (sufficient) conditions mentioned in Remark~\ref{rmk:V-condition-simple} for ensuring a valid set of inner spline functions.
	
	Furthermore, by the continuity of $\Phi_t$ on $[0,1]$ and the density of splines in $C([0,1])$ there exists a spline $\Phi^{\bm{M},\bm{q}}_{ \bm{w}^{t}}$ with $\bm{q}\geq1$ (without loss of generality we can assume the same $\bm{M},\bm{q}$ for each $t$) such that
	$$ \bigl\| \Phi_t - \widetilde\Phi^{\bm{M},\bm{q}}_{\bm{w}^{t}} \bigr\|_\infty \leq \frac{\epsilon}{2(2D+1)},$$
	where 
	$$\widetilde\Phi^{\bm{M},\bm{q}}_{\bm{w}^{t}}(x):=\Phi^{\bm{M},\bm{q}}_{\bm{w}^{t}}(\chi_t(x)).$$
	By the uniform continuity of $\Phi^{\bm{M},\bm{q}}_{\bm{w}^{t}}$, and thus $\widetilde\Phi^{\bm{M},\bm{q}}_{\bm{w}^{t}}$, on $[0,1]$ there exists $\delta_t>0$ independent of the chosen $x,y$ such that $|x-y| \leq \delta_t$ implies
	$$ \bigl| \widetilde\Phi^{\bm{M},\bm{q}}_{\bm{w}^{t}}(x) - \widetilde\Phi^{\bm{M},\bm{q}}_{ \bm{w}^{t}}(y) \bigr| \leq \frac{\epsilon}{2(2D+1)}.$$
	Note that
	$$\Phi^{\bm{M},\bm{q}}_{\bm{w}^{t}}\biggl( \sum_{d=1}^D \Psi^{N_1,p_1}_{\bm{v}^{t,1,d}}\biggr)=\widetilde\Phi^{\bm{M},\bm{q}}_{\bm{w}^{t}}\biggl( \sum_{d=1}^D \widetilde\Psi^{N_1,p_1}_{\bm{v}^{t,1,d}}\biggr).$$
	
	Finally, we have
	\begin{align*}
	&\bigl\| f-E^{2D+1,\bm{N},\bm{M},\bm{p},\bm{q}}_{\bm{V},\bm{W}} \bigr\|_k
	  \leq \bigl\| f-E^{2D+1,\bm{N},\bm{M},\bm{p},\bm{q}}_{\bm{V},\bm{W}} \bigr\|_\infty\\
	& \leq \sum_{t=1}^{2D+1} \Biggl\| \Phi_t\biggl( \sum_{d=1}^{D} \Psi_{t,d} \biggr)  - \Phi^{\bm{M},\bm{q}}_{\bm{w}^{t}}\biggl( \sum_{d=1}^D \Psi^{N_1,p_1}_{\bm{v}^{t,1,d}}\biggr) \Biggr\|_\infty.
	\end{align*}
	This upper bound is less than or equal to
	\begin{align*}
	&\sum_{t=1}^{2D + 1} \Biggl\| \Phi_t\biggl( \sum_{d=1}^{D} \Psi_{t,d}\biggr)  - \widetilde\Phi^{\bm{M},\bm{q}}_{\bm{w}^{t}}\biggl(\sum_{d=1}^{D} \Psi_{t,d} \biggr) \Biggr\|_\infty\\
	&+ \sum_{t=1}^{2D+1} \Biggl\| \widetilde\Phi^{\bm{M},\bm{q}}_{\bm{w}^{t}}\biggl( \sum_{d=1}^{D} \Psi_{t,d} \biggr)  - \widetilde\Phi^{\bm{M},\bm{q}}_{\bm{w}^{t}}\biggl( \sum_{d=1}^D \widetilde\Psi^{N_1,p_1}_{\bm{v}^{t,1,d}}\biggr) \Biggr\|_\infty,
	\end{align*}
	which, in turn, is less than or equal to $\epsilon$ by our choice of inner and outer functions. This concludes the proof.
	\end{proof}
	
	From Proposition~\ref{pro:params} we deduce that the number of weight parameters needed to express $E^{2D+1,\bm{N},\bm{M},\bm{p},\bm{q}}_{\bm{V},\bm{W}}$ in Theorem~\ref{thm:kolmogorov-approx} is equal to
	$$ D (2D + 1) N_1 + (2D + 1) M_1.$$
	The proof of the theorem does not provide us explicit values of 
	$N_1,M_1$ that achieve the required accuracy $\epsilon$. Also the degrees $p_1,q_1$ are not specified.
	In the next section we consider another subclass of the model ExSpliNet that allows for approximation results with  explicit estimation of the parameters.
	
	\subsection{Multivariate spline approximation}
	In this section we demonstrate the expressivity of ExSpliNet by exploiting known approximation results for multivariate splines \cite{Schumaker}. 
	Let $\mathcal{L}^r_k$ be the Sobolev function space equipped with the norm that is a combination of $L_k$-norms of the function together with its partial derivatives up to order $r\geq1$, i.e.,
	\begin{align*}
	\mathcal{L}^r_k:=\biggl\{f:[0,1]^D\rightarrow \R: \biggl\|\frac{\partial^{r_1}}{\partial x_1^{r_1}}\cdots\frac{\partial^{r_D}}{\partial x_D^{r_D}}f\biggr\|_k<\infty,\\
	\forall\, 0\leq r_1+\cdots+r_D\leq r\biggr\}.
	\end{align*}
	Fix $\bm{q} := [q_1, \ldots, q_D]\in\Z^D$ with each $q_d\geq0$, and $\bm{M} := [M_1, \ldots, M_D]\in\Z^D$ with each $M_d> q_d$. For any smooth function $f\in\mathcal{L}^r_k : [0,1]^D \rightarrow \R$, there exists a vector $\bm{w}\in\R^{M_1\cdots M_D}$ such that 
	\begin{equation}\label{eq:spline_approx}
	\|f-s^{\bm{M},\bm{q}}_{\bm{w}}\|_k \leq C_k \sum_{d=1}^D (h_d)^r \Bigl\|\frac{\partial^r}{\partial x_d^r}f\Bigr\|_k,
	\end{equation}
	for any $q_d \geq r-1$, where
	\begin{equation}\label{eq:h}
	h_d:=\frac{1}{M_d-q_d}, \quad d=1,\ldots,D,
	\end{equation}
	and $C_k$ is a constant independent of $f$ and $h_d$, but may depend on $r$ and $q_d$. In the recent work \cite{Sande1}, the following explicit constant has been derived in case of the $L_2$-norm: 
	$$C_2=\Bigl(\frac{1}{\pi}\Bigr)^r.$$

	Any multivariate spline of the form $s^{\bm{M},\bm{q}}_{\bm{w}}$ can be represented as an ExSpliNet function. Indeed, for $T = 1$ and $L = D$, we have
	$$E^{1,\bm{N},\bm{M},\bm{p},\bm{q}}_{\bm{V},\bm{W}} (\bm{x}) =
	\bm{w}^1 \cdot \mathcal{B}_{\bm{M},\bm{q}}\bigl( \Psi^{\bm{N},\bm{p}}_{\bm{V}^1} (\bm{x}) \bigr). $$
	Hence, we obtain
	$$E^{1,\bm{N},\bm{M},\bm{p},\bm{q}}_{\bm{V},\bm{W}} (\bm{x}) = s^{\bm{M},\bm{q}}_{\bm{w}} (\bm{x})$$
	by taking $\bm{w}^1=\bm{w}$ and choosing $\bm{V},\bm{N},\bm{p}$ according to the following proposition.
	
	\begin{proposition}\label{pro:decision-vec-id}
	For any given $\bm{p} \in \Z^L$ and $\bm{N} \in \Z^L$ with each $N_\ell>p_\ell\geq1$ and $L=D$, there exist vectors $\bm{v}^{t,\ell,d} \in \R^{N_\ell}$ such that
	\begin{equation}\label{eq:decision-vec-id}
	\Psi^{\bm{N},\bm{p}}_{\bm{V}^t} (\bm{x}) = \bm{x}, \quad \bm{x} \in [0,1]^D.
	\end{equation}
	These vectors satisfy the assumption \eqref{eq:V-condition}.
	\end{proposition}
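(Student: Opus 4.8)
The plan is to build the vectors $\bm{v}^{t,\ell,d}$ explicitly, coordinate by coordinate, using the exact B-spline representation of the identity function recorded in \eqref{eq:identity-Bspline}. Since $L = D$, the identity \eqref{eq:decision-vec-id} is equivalent to requiring that for each level $\ell \in \{1,\ldots,D\}$ the scalar function $\Psi^{N_\ell,p_\ell}_{\bm{V}^{t,\ell}}(\bm{x}) = \sum_{d=1}^D \bm{v}^{t,\ell,d}\cdot\mathcal{B}_{N_\ell,p_\ell}(x_d)$ collapses to $x_\ell$. So it suffices to handle the levels one at a time.

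First I would decouple the input variables by letting only the $d=\ell$ term be active: set $\bm{v}^{t,\ell,\ell}$ equal to the vector of Greville abscissae $[\xi^*_{N_\ell,p_\ell,1},\ldots,\xi^*_{N_\ell,p_\ell,N_\ell}]$ from \eqref{eq:greville}, and set $\bm{v}^{t,\ell,d}=\bm{0}$ for every $d\neq\ell$. This is legitimate since $p_\ell\geq1$ is assumed. Substituting, all summands with $d\neq\ell$ vanish, and \eqref{eq:identity-Bspline} turns the surviving summand into $x_\ell$; reading this across all $\ell$ yields \eqref{eq:decision-vec-id}.

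Second I would verify the admissibility condition \eqref{eq:V-condition}. For $d\neq\ell$ the vector is zero, so take $\nu^{t,\ell,d}=0$; for $d=\ell$ the bounds $0=\xi^*_{N_\ell,p_\ell,1}\leq\xi^*_{N_\ell,p_\ell,n_\ell}\leq\xi^*_{N_\ell,p_\ell,N_\ell}=1$ noted just after \eqref{eq:greville} allow $\nu^{t,\ell,\ell}=1$, so that $\sum_{d=1}^D\nu^{t,\ell,d}=1\leq1$. Since none of the constructed data depends on $t$, the same choice serves all $t=1,\ldots,T$. I do not expect a genuine obstacle here: the only nontrivial ingredient is the exact reproduction of the identity by B-splines of degree at least one, which is already supplied by \eqref{eq:identity-Bspline}; the remaining work is index bookkeeping and a direct check of the nonnegativity/partition bound, with the only mild care being that a single choice of Greville-abscissae weights simultaneously enforces both the interpolation identity and the range constraint.
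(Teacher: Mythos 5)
Your construction coincides with the paper's own proof: both take $\bm{v}^{t,\ell,\ell}$ to be the Greville-abscissae vector, set $\bm{v}^{t,\ell,d}=\bm{0}$ for $d\neq\ell$, invoke \eqref{eq:identity-Bspline} to obtain $x_\ell$, and check \eqref{eq:V-condition} with $\nu^{t,\ell,\ell}=\xi^*_{N_\ell,p_\ell,N_\ell}=1$ and $\nu^{t,\ell,d}=0$ otherwise. The argument is correct and complete.
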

	\begin{proof}
	Fix $\bm{x} \in [0,1]^D$. By construction we have
	$$\Psi^{N_\ell,p_\ell}_{\bm{V}^{t,\ell}}(\bm{x})
	= \sum_{d=1}^D  \bm{v}^{t,\ell,d} \cdot \mathcal{B}_{N_\ell,p_\ell}(x_d).$$
	From \eqref{eq:identity-Bspline} we deduce that the choice
	\begin{equation}\label{eq:choice-v}
	\begin{aligned}
	\bm{v}^{t,\ell,\ell} &=[\xi^*_{N_\ell,p_\ell,1},\ldots,\xi^*_{N_\ell,p_\ell,N_\ell}], \\ 
	\bm{v}^{t,\ell,d} &=[0,\ldots,0], \quad d \neq l,
	\end{aligned}
	\end{equation}
	results in
	$$\Psi^{N_\ell,p_\ell}_{\bm{V}^{t,\ell}}(\bm{x}) = x_\ell,
	$$
	and, as a consequence, we get \eqref{eq:decision-vec-id}. The choice in \eqref{eq:choice-v} satisfies the conditions in \eqref{eq:V-condition}. This can be verified by taking $\nu^{t,\ell,\ell}=\xi^*_{N_\ell,p_\ell,N_\ell}=1$ and $\nu^{t,\ell,d}=0$ for $d\neq \ell$.
	\end{proof}
	\begin{remark}\label{rmk:decision-vec-id}
	A particularly simple instance of \eqref{eq:decision-vec-id} is obtained by taking $p_\ell=1$ and $N_\ell=2$. In this case,
	$$\Psi^{N_\ell,p_\ell}_{\bm{V}^{t,\ell}}(\bm{x}) = B_{2,1,2}(x_\ell) = x_\ell.$$
	\end{remark}
	
	We then arrive at the following approximation result.
	\begin{theorem}\label{thm:spline-approx}
	For any smooth function $f\in\mathcal{L}^r_k : [0,1]^D \rightarrow \R$, there exists an ExSpliNet function such that 
	$$\|f-E^{1,\bm{N},\bm{M},\bm{p},\bm{q}}_{\bm{V},\bm{W}}\|_k \leq C_k \sum_{d=1}^D (h_d)^r\Bigl\|\frac{\partial^r}{\partial x_d^r}f\Bigr\|_k,$$
	where $h_d$ is defined in \eqref{eq:h} and $C_k$ is the constant specified in \eqref{eq:spline_approx}. In particular, for $k=2$ it holds $C_2=(\frac{1}{\pi})^r$.
	\end{theorem}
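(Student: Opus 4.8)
The plan is to exhibit the ExSpliNet function so that it \emph{coincides} with a multivariate tensor-product spline, and then to quote the classical spline approximation estimate \eqref{eq:spline_approx} without modification.

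First I would fix the architecture exactly as in the paragraph preceding the statement: set $T = 1$ and $L = D$, let $\bm{M},\bm{q}$ be the data entering \eqref{eq:spline_approx} (imposing the degree condition $q_d \geq r-1$ so that the estimate is applicable), and choose inner hyperparameters $\bm{N},\bm{p}$ with $N_\ell > p_\ell \geq 1$. Proposition~\ref{pro:decision-vec-id} then supplies inner weights $\bm{V}$ for which $\Psi^{\bm{N},\bm{p}}_{\bm{V}^1}(\bm{x}) = \bm{x}$ on $[0,1]^D$, and it also certifies that these weights satisfy \eqref{eq:V-condition}, so the network is a legitimate instance of the model. The upshot is that with $\bm{w}^1 := \bm{w}$ one has, for every $\bm{x} \in [0,1]^D$,
$$E^{1,\bm{N},\bm{M},\bm{p},\bm{q}}_{\bm{V},\bm{W}}(\bm{x}) = \bm{w}^1 \cdot \mathcal{B}_{\bm{M},\bm{q}}\bigl(\Psi^{\bm{N},\bm{p}}_{\bm{V}^1}(\bm{x})\bigr) = \bm{w} \cdot \mathcal{B}_{\bm{M},\bm{q}}(\bm{x}) = s^{\bm{M},\bm{q}}_{\bm{w}}(\bm{x}).$$

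Second, given $f \in \mathcal{L}^r_k$, I would invoke \eqref{eq:spline_approx} to select the weight vector $\bm{w} \in \R^{M_1 \cdots M_D}$ realizing the bound $\|f - s^{\bm{M},\bm{q}}_{\bm{w}}\|_k \leq C_k \sum_{d=1}^D (h_d)^r \bigl\|\frac{\partial^r}{\partial x_d^r}f\bigr\|_k$, substitute the identity from the previous step, and conclude. The special constant $C_2 = (\frac{1}{\pi})^r$ is simply carried over from \eqref{eq:spline_approx}.

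There is no genuine analytic obstacle here: the whole argument rests on Proposition~\ref{pro:decision-vec-id} --- the fact that the inner layer can reproduce the identity map while remaining admissible --- together with the external spline estimate. The only thing requiring care is the bookkeeping of hypotheses (the degree condition $q_d \geq r-1$ and the collapse $T=1$, $L=D$ that turns the outer superposition into a single tensor-product spline), so I would present the proof as a short, direct corollary of these two ingredients rather than elaborating further.
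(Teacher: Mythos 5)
Your argument is exactly the one the paper uses: the text preceding the theorem establishes that with $T=1$, $L=D$, and the inner weights from Proposition~\ref{pro:decision-vec-id} reproducing the identity, the ExSpliNet function collapses to the tensor-product spline $s^{\bm{M},\bm{q}}_{\bm{w}}$, after which the classical estimate \eqref{eq:spline_approx} gives the bound. Your bookkeeping of the hypotheses ($q_d \geq r-1$ and the admissibility of the inner weights) matches the paper's, so the proposal is correct and essentially identical in approach.
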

	
	From Proposition~\ref{pro:params} we deduce that the number of weight parameters needed to express $E^{1,\bm{N},\bm{M},\bm{p},\bm{q}}_{\bm{V},\bm{W}}$ in Theorem~\ref{thm:spline-approx} is equal to
	$$ D\sum_{d=1}^{D} N_d + \prod_{d=1}^D M_d.$$
	Theorem~\ref{thm:spline-approx} implies that for any $f\in\mathcal{L}^r_k$ and $\epsilon >0$, one can construct an ExSpliNet function such that
	$$\|f-E^{1,\bm{N},\bm{M},\bm{p},\bm{q}}_{\bm{V},\bm{W}}\|_k \leq \epsilon,$$
	if 
	$$M_d-q_d\geq \biggl( \frac{C_k}{\epsilon}{ \sum_{d=1}^D\Bigl\|\frac{\partial^r}{\partial x_d^r}f\Bigr\|_k} \biggr)^{1/r}, \quad d=1,\ldots,D,$$
	for any choice of $\bm{N},\bm{p},\bm{q}$ such that each $N_d>p_d \geq 1$ and $q_d \geq r-1$.
	
	\begin{remark}\label{rmk:L-intermediate}
	In this section we considered $L=D$. However, working with $D$-variate splines might be in practice computationally too expensive for high values of $D$. On the other hand, in Section~\ref{sec:Approx-Kolmogorov} we considered $L=1$. In both cases we derived universal approximation results, so we believe that similar results could also be obtained for intermediate cases $1<L<D$.
	\end{remark}

	\section{Implementation and experiments}\label{sec:Experiments}
	The ExSpliNet model can be easily implemented in the Python ecosystem by combining standard B-spline code (for example, the class {scipy.interpolate.BSpline}) with a standard deep learning environment (for example, Keras/Tensorflow or PyTorch).
	In our implementation we considered a simplified version of the conditions in \eqref{eq:V-condition}, see Remark~\ref{rmk:V-condition-simple}, by imposing
	\begin{equation*}%\label{eq:V-condition-simplest}
	0\leq v^{t,\ell,d}_{n_\ell}, \quad \sum_{d=1}^{D} \sum_{n_\ell=1}^{N_\ell} v^{t,\ell,d}_{n_\ell} = 1.
	\end{equation*}
	Such constrained problem can be transformed into an unconstrained problem by taking
	$$ v^{t,\ell,d}_{n_\ell} = \frac{ (u^{t,\ell,d}_{n_\ell})^2 }{\sum_{d=1}^{D} \sum_{n_\ell=1}^{N_\ell} \bigl(u^{t,\ell,d}_{n_\ell}\bigr)^2 }, $$
	for any given $u^{t,\ell,d}_{n_\ell}$. This allows for the use of standard unconstrained optimization algorithms via (stochastic) gradient descent, such as the Adam optimizer \cite{Kingma}.
	Alternatively, this kind of conditions can be enforced by using the \emph{layer weight constraints} functionalities built in both mentioned deep learning environments.
	
	In the following, we illustrate the performance of the model, based on this implementation, on a small selection of synthetic approximation tasks and classical machine learning tasks. 
	
	\subsection{Approximation tasks}
	In our first set of experiments, we consider data-driven function approximation and the approximate solution of differential problems in the PINN framework \cite{Raissi}. We compare the ExSpliNet model for those two tasks with classical feed-forward neural networks (FFNNs). We see that the enhanced approximation properties of B-splines generally lead to a reduced complexity of the network and/or better accuracy.
	
	\paragraph{Function approximation} 
	The function approximation problem is addressed as a risk minimization problem of the form \eqref{eq:risk} using the squared loss function and the training dataset is chosen to be a random and uniform sampling of the function.

	Three types of network configurations are compared to approximate two functions $u$ on a domain $\Omega\subset\R^D$ for $D=1$ and $D=4$, respectively. Note that $O=1$ in this case. In a preprocessing step, the domain $\Omega$ may need to be rescaled so that it belongs to the unit domain $[0,1]^D$.
\begin{itemize}[leftmargin=*]
	\item FFNN configurations:
	\begin{itemize}[leftmargin=*]
		\item layers $l \in [2, 4, 6, \ldots, 20]$,
		\item neurons per layer  $n \in [10, 30, 50, \ldots, 110]$,
		\item activation functions $a \in [\ReLU, \sigmoid, \tanh]$.
	\end{itemize}
	\item First type of ExSpliNet configurations with fixed degrees $p_\ell = q_\ell = 1$, for a direct comparison with FFNN based on linear ReLU functions:
	\begin{itemize}[leftmargin=*]
		\item levels $L \in [2, 3]$,
		\item trees $T \in [5, 20]$,
		\item inner B-splines per level $N_\ell = [5, 10, 30, 50]$,
		\item outer B-splines per level $M_\ell \in [5, 10]$.
	\end{itemize}
	\item Second type of ExSpliNet configurations with fixed higher degrees $p_\ell = q_\ell = 3$ and the other parameters are the same as for the first ExSpliNet configurations.
\end{itemize}
	For each of the above three choices, only the best configuration, with the smallest test mean squared error (Test MSE), is mentioned in the following experiments. Both the Test MSE and the required number of parameters are reported. In all the experiments we used the Adam optimizer with a learning rate of $0.001$ and $15$ epochs.
	We remark that we have also cross-validated the obtained best configurations, but since the results were very comparable we do not mention them in the experiments.
	
\begin{experiment}\label{exp:1}
	We randomly and uniformly sampled $5,000$ points for training and $2,500$ for testing from the function
	$$u(x) = \cos(20\pi x),  \quad  x \sim U[0,1].$$
	\begin{itemize}[leftmargin=*]
		\item Best FFNN: $l= 10$, $n=30$, $a=\ReLU$, \\ Parameters $ = 7,531$, Test MSE $ = 3.96 \cdot 10^{-1} $.
		\item Best ExSpliNet 1: $T= 20$, $L=3$, $N_\ell=30$, $M_\ell=5$, \\ Parameters $ = 4,300$, Test MSE $ = 6.99 \cdot 10^{-5}$.
		\item Best ExSpliNet 2: $T= 5$, $L=3$, $N_\ell=50$, $M_\ell = 10$, \\ Parameters $ = 5,750$, Test MSE $ =1.80 \cdot 10^{-6}$.
	\end{itemize}
\end{experiment}
\begin{experiment}\label{exp:2}
	We randomly and uniformly sampled $50,000$ points for training and $25,000$ for testing from the function
	$$u(\bm{x}) = x_1 + x_2^2 + x_3^3 +  e^{x_4}  + x_1x_2 + x_3x_4, \quad  x_d \sim U[-1,1].$$	
	\begin{itemize}[leftmargin=*]
	\item Best FFNN: $l= 4$, $n=50$, $a=\ReLU$, \\ Parameters $ = 5,401$, Test MSE $ = 2.56 \cdot 10^{-4}$.
	\item Best ExSpliNet 1: $T= 5$, $L=3$, $N_\ell=5$, $M_\ell=5$, \\ Parameters $ = 925$, Test MSE $ = 1.28 \cdot 10^{-4}$.
	\item Best ExSpliNet 2: $T=20$, $L=2$, $N_\ell=5$, $M_\ell=5$, \\ Parameters $ = 1,300$, Test MSE $ = 1.31 \cdot 10^{-5}$.
	\end{itemize}
\end{experiment}
	
	\paragraph{Physics-informed neural network}
	In the PINN framework, a given boundary-value differential problem is transformed into a risk minimization problem, where the so-called differential empirical risk is composed as the sum of two empirical risks; the first takes care of the differential problem in the interior of the domain and the other takes care of the boundary conditions.

	For example, consider the second-order differential equation
	\begin{equation}\label{eq:pde}
	 -\Delta u(\bm{x}) = f(\bm{x}), \quad  \bm{x} \in \Omega\subset\R^D, 
	\end{equation}
	with boundary condition $u(\bm{x}) = g(\bm{x})$ for $\bm{x}\in\partial\Omega$.
	The idea is to take the neural network as a model of the approximate solution $\hat{u}$. 
	Suppose we have a set of collocation points, consisting of interior points $A_i := \{\bm{x}_i^1,\ldots,\bm{x}_i^{K_i} \}$ belonging to $\Omega$ and boundary points $A_b := \{\bm{x}_b^1,\ldots,\bm{x}_b^{K_b} \}$ belonging to $\partial\Omega$. Then, the \textit{differential empirical risk} $\mathcal{E}(A_i,A_b)$ takes the following form:
	$$ \mathcal{E}(A_i,A_b) := \mathcal{E}_i(A_i) + \lambda \mathcal{E}_b(A_b), $$
	for some $\lambda>0$ and
	\begin{align*}
	\mathcal{E}_i(A_i) &:= \frac{1}{K_i} \sum_{k=1}^{K_i} \bigl(-\Delta \hat{u}(\bm{x}_i^k) - f(\bm{x}_i^k)\bigr)^2,\\
	\mathcal{E}_b(A_b) &:= \frac{1}{K_b} \sum_{k=1}^{K_b} \bigl(\hat{u}(\bm{x}_b^k)-g(\bm{x}_b^k)\bigr)^2.
	\end{align*}
	During training this function is minimized on a set of randomly chosen collocation points.

	We remark that the activation functions need to be at least $C^2$ smooth to solve the differential problem in \eqref{eq:pde}. For this reason, the ReLU activation function is not suitable in FFNN, and it is common to rely on the tanh activation function in the PINN framework. Regarding ExSpliNet, the degree of the inner and outer B-splines must be at least three to ensure $C^2$ smoothness. Again, in a preprocessing step, the domain $\Omega$ may need to be rescaled so that it belongs to the unit domain $[0,1]^D$.

	In the following experiments, we set $\lambda = 10^{4}$ and used the Adam optimizer with a learning rate of $0.001$ and $5,000$ epochs.

\begin{experiment}\label{exp:3}
	We solved the differential equation
	\begin{equation}\label{eq:pde-exp}
	 -u''(x) = 4\pi^2 \sin (2\pi x), \quad x \in (0,1),
	\end{equation}
	with boundary condition $u(0) = u(1) = 0$.
	The exact solution of this problem is 
	$ u(x) = \sin (2 \pi x)$.
	For each run we took $1,000$ collocation points. The results obtained by different configurations of FFNN with tanh activation functions are summarized in Table~\ref{tab:NN3}, while the results obtained by different configurations of ExSpliNet with $p_\ell=q_\ell=3$ and $L=2$ are summarized in Table~\ref{tab:Ex3}. The three best results are indicated in boldface.
	In these tables, ``DER'' stands for the differential empirical risk after training, while ``MSE'' stands for the mean square error between the exact solution and the trained network model on $300$ uniformly spaced points in the domain.
\end{experiment}
\begin{table}[t!]
	\centering\small
	\begin{tabular}{rrrrr} 
		\hline
		$l$ & $n$ & Params & DER & MSE \\
		\hline & & & &\\[-1.9ex]
		 $4$ & $10$ &   $251$ & $\SciNot{1.61}{-1}$ & $\SciNot{1.24}{-5}$  \\
		 $4$ & $20$ &   $901$ & $\SciNot{7.26}{-2}$ & \bm{$\SciNot{4.98}{-6}$}  \\
		 $4$ & $30$ & $1,951$ & $\SciNot{1.14}{-1}$ & \bm{$\SciNot{4.17}{-6}$}  \\
		 $6$ & $10$ &   $471$ & $\SciNot{4.46}{-1}$ & $\SciNot{1.35}{-4}$  \\
		 $6$ & $20$ & $1,741$ & $\SciNot{1.36}{-1}$ & $\SciNot{8.57}{-6}$  \\
		 $6$ & $30$ & $3,811$ & $\SciNot{2.03}{+0}$ & $\SciNot{1.22}{-4}$  \\
		 $8$ & $10$ &   $691$ & $\SciNot{7.99}{-2}$ & $\SciNot{5.52}{-5}$  \\
		 $8$ & $20$ & $2,581$ & $\SciNot{4.08}{-2}$ & \bm{$\SciNot{3.25}{-6}$}  \\
		 $8$ & $30$ & $5,671$ & $\SciNot{1.82}{+0}$ & $\SciNot{4.79}{-5}$  \\
		$10$ & $10$ &   $911$ & $\SciNot{8.64}{-1}$ & $\SciNot{1.42}{-5}$  \\
		$10$ & $20$ & $3,421$ & $\SciNot{9.50}{-2}$ & $\SciNot{4.99}{-6}$  \\
		$10$ & $30$ & $7,531$ & $\SciNot{1.35}{+0}$ & $\SciNot{1.05}{-4}$ 
	\end{tabular}\caption{Experiment~\ref{exp:3}: Results for differential problem \eqref{eq:pde-exp} obtained by different configurations of FFNN with tanh activation functions.} \label{tab:NN3}
\end{table}
\begin{table}[t!]
	\centering\small
	\begin{tabular}{rrrrrr} \hline
		$T$ & $N_\ell$ & $M_\ell$ & Params & DER & MSE  \\ 
		\hline & & & & &\\[-1.9ex]
		 $5$ &  $5$ &  $5$ &   $175$ & $\SciNot{1.00}{-3}$ & $\SciNot{5.11}{-10}$ \\
		 $5$ &  $5$ & $10$ &   $550$ & $\SciNot{2.03}{-2}$ & $\SciNot{6.62}{-9}$  \\
		 $5$ &  $5$ & $20$ & $2,050$ & $\SciNot{9.43}{-2}$ & $\SciNot{3.90}{-8}$  \\
		
		 $5$ & $10$ &  $5$ &   $225$ & $\SciNot{6.91}{-3}$ & $\SciNot{4.81}{-10}$ \\
		 $5$ & $10$ & $10$ &   $600$ & $\SciNot{6.01}{-2}$ & $\SciNot{1.55}{-7}$  \\
		 $5$ & $10$ & $20$ & $2,100$ & $\SciNot{1.72}{-1}$ & $\SciNot{6.85}{-7}$  \\
		
		$10$ &  $5$ &  $5$ &   $350$ & $\SciNot{3.34}{-4}$ & \bm{$\SciNot{1.11}{-10}$} \\
		$10$ &  $5$ & $10$ & $1,100$ & $\SciNot{2.26}{-3}$ & \bm{$\SciNot{1.68}{-11}$} \\
		$10$ &  $5$ & $20$ & $4,100$ & $\SciNot{2.34}{-1}$ & $\SciNot{1.16}{-7}$  \\
		
		$10$ & $10$ &  $5$ &   $450$ & $\SciNot{3.78}{-3}$ & \bm{$\SciNot{5.90}{-11}$} \\
		$10$ & $10$ & $10$ & $1,200$ & $\SciNot{6.18}{0}$  & $\SciNot{1.15}{-6}$  \\
		$10$ & $10$ & $20$ & $4,200$ & $\SciNot{2.71}{-2}$ & $\SciNot{3.69}{-8}$ 
	\end{tabular}\caption{Experiment~\ref{exp:3}: Results for differential problem \eqref{eq:pde-exp} obtained by different configurations of ExSpliNet with $p_\ell=q_\ell=3$, $L=2$.}  \label{tab:Ex3}
\end{table}
\begin{figure}[h!]
	\centering
	\subfigure{\includegraphics[trim={1cm 0.7cm 1cm 1cm}, clip, height=4.9cm]{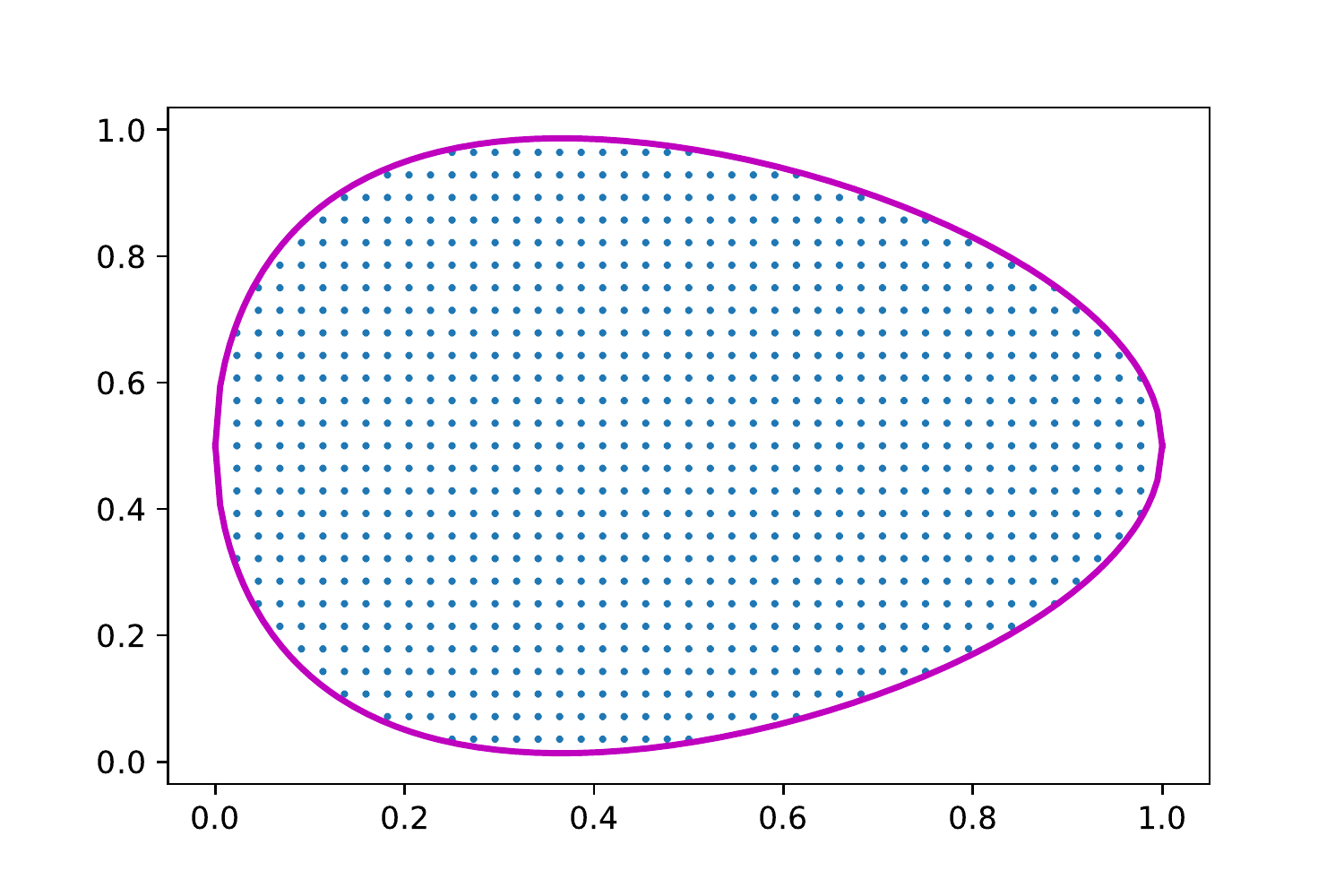}}
	\caption{The domain $\Omega$ of Experiment~\ref{exp:4}.}
	\label{fig:domain}
\end{figure}
\begin{experiment}\label{exp:4}
	We solved the differential equation \eqref{eq:pde} for $D=2$ and $\Omega$ is the egg shaped domain depicted in Figure~\ref{fig:domain}. The functions $f$ and $g$ are manufactured so that the exact solution of this problem is 
	$ u(x_1,x_2) = \sin (\pi (x_1^2 + x_2^2))$.
	For each run we took $2,062$ collocation points in the interior and $600$ on the boundary. The results obtained by different configurations of FFNN with tanh activation functions are summarized in Table~\ref{tab:NN4}, while the results obtained by different configurations of ExSpliNet with $p_\ell=q_\ell=3$ and $L=2$ are summarized in Table~\ref{tab:Ex4}. The three best results are indicated in boldface.
	In these tables, ``DER'' stands for the differential empirical risk after training, while ``MSE'' stands for the mean square error between the exact solution and the trained network model on $927$ uniformly spaced points in $\Omega$ (shown in Figure~\ref{fig:domain}).
\end{experiment}
\begin{table}[t!]
	\centering\small
	\begin{tabular}{rrrrr} 
		\hline
		$l$ & $n$ & Params & DER & MSE \\
		\hline & & & &\\[-1.9ex]
		 $4$ & $10$ &   $261$ & $\SciNot{2.06}{+0}$ & $\SciNot{1.31}{-4}$  \\
		 $4$ & $20$ &   $921$ & $\SciNot{9.81}{-1}$ & \bm{$\SciNot{5.02}{-5}$}  \\
		 $4$ & $30$ & $1,981$ & $\SciNot{1.06}{+0}$ & $\SciNot{5.26}{-5}$  \\
		 $6$ & $10$ &   $481$ & $\SciNot{1.71}{+0}$ & $\SciNot{1.69}{-4}$  \\
		 $6$ & $20$ & $1,761$ & $\SciNot{1.30}{+0}$ & \bm{$\SciNot{4.99}{-5}$}  \\
		 $6$ & $30$ & $3,841$ & $\SciNot{9.02}{-1}$ & \bm{$\SciNot{4.82}{-5}$}  \\
		 $8$ & $10$ &   $701$ & $\SciNot{2.68}{+0}$ & $\SciNot{2.58}{-4}$  \\
		 $8$ & $20$ & $2,601$ & $\SciNot{1.72}{+0}$ & $\SciNot{9.13}{-5}$  \\
		 $8$ & $30$ & $5,701$ & $\SciNot{4.07}{+0}$ & $\SciNot{9.42}{-5}$  \\
		$10$ & $10$ &   $921$ & $\SciNot{1.63}{+0}$ & $\SciNot{9.25}{-5}$  \\
		$10$ & $20$ & $3,441$ & $\SciNot{1.57}{+0}$ & $\SciNot{9.57}{-5}$  \\
		$10$ & $30$ & $7,561$ & $\SciNot{1.85}{+0}$ & $\SciNot{7.49}{-5}$   
	\end{tabular}\caption{Experiment~\ref{exp:4}: Results for the considered 2D differential problem obtained by different configurations of FFNN with tanh activation functions.}  \label{tab:NN4}
\end{table}
\begin{table}[t!]
	\centering\small
	\begin{tabular}{rrrrrr} \hline
		$T$ & $N_\ell$ & $M_\ell$ & Params & DER & MSE  \\ 
		\hline & & & & &\\[-1.9ex]
		 $5$ &  $5$ &  $5$ &   $225$ & $\SciNot{3.29}{-1}$ & $\SciNot{3.69}{-6}$  \\
		 $5$ &  $5$ & $10$ &   $600$ & $\SciNot{2.44}{-1}$ & $\SciNot{1.34}{-6}$  \\
		 $5$ &  $5$ & $20$ & $2,100$ & $\SciNot{1.50}{+0}$ & $\SciNot{2.07}{-6}$  \\
		 $5$ & $10$ &  $5$ &   $325$ & $\SciNot{2.34}{-1}$ & $\SciNot{1.03}{-6}$  \\
		 $5$ & $10$ & $10$ &   $700$ & $\SciNot{2.60}{-1}$ & $\SciNot{9.10}{-7}$  \\
		 $5$ & $10$ & $20$ & $2,200$ & $\SciNot{3.90}{-1}$ & $\SciNot{3.56}{-7}$  \\
		$10$ &  $5$ &  $5$ &   $450$ & $\SciNot{6.70}{-2}$ & $\SciNot{3.53}{-7}$  \\
		$10$ &  $5$ & $10$ & $1,200$ & $\SciNot{5.32}{-2}$ & \bm{$\SciNot{5.80}{-8}$}  \\
		$10$ &  $5$ & $20$ & $4,200$ & $\SciNot{7.39}{-2}$ & \bm{$\SciNot{1.18}{-7}$}  \\
		$10$ & $10$ &  $5$ &   $650$ & $\SciNot{1.83}{-1}$ & $\SciNot{1.76}{-7}$  \\
		$10$ & $10$ & $10$ & $1,400$ & $\SciNot{5.49}{-2}$ & \bm{$\SciNot{1.00}{-7}$}  \\
		$10$ & $10$ & $20$ & $4,400$ & $\SciNot{1.25}{-1}$ & {$\SciNot{2.04}{-7}$}  
	\end{tabular}\caption{Experiment~\ref{exp:4}: Results for the considered 2D differential problem obtained by different configurations of ExSpliNet with $p_\ell=q_\ell=3$, $L=2$.}  \label{tab:Ex4}
\end{table}

	\subsection{Classical learning tasks}
	In its current form, ExSpliNet is a general-purpose model, not designed nor tailored for a specific task (like convolutional neural networks for images or recurrent neural networks for text). In the following, we illustrate the general applicability of ExSpliNet in classical machine learning benchmark tasks, both classification and regression.

	\paragraph{Classification}  We first focus on the MNIST and FMNIST datasets for simple image classification \cite{LeCun,Xiao}.

\begin{experiment}\label{exp:MNIST}
	The MNIST dataset is a classical dataset of grayscale images representing handwritten digits. It contains $60,000$ training images and $10,000$ testing images. The task consists of classifying a given image of the dataset in one of the ten output classes, where each class represents a particular digit (so $O=10$). The images have $28\times28$ pixels (so $D=784$).
	Table~\ref{tab:MNIST} summarizes the results for different configurations of ExSpliNet, using $p_\ell=q_\ell=1$, $L=2$, and the Adam optimizer with a learning rate of $0.001$ and $35$ epochs. The best result is indicated in boldface. Here, ``\% Acc'' stands for the test accuracy of the classification in percentage. 
\end{experiment}
\begin{table}[t!]
	\centering\small
	\begin{tabular}{rrrrr} \hline
		$T$ & $N_\ell$ & $M_\ell$ & Params & \% Acc \\ \hline
		$50$ & $2$ & $2$ & $158,800$ & $96.25$ \\
		$50$ & $2$ & $3$ & $161,300$ & $97.91$ \\
		$50$ & $2$ & $4$ & $164,800$ & $98.26$ \\
		$50$ & $4$ & $2$ & $315,600$ & $96.59$ \\
		$50$ & $4$ & $3$ & $318,100$ & $97.87$ \\
		$50$ & $4$ & $4$ & $321,600$ & $97.94$ \\		
		$100$ & $2$ & $2$ & $317,600$ & $97.28$ \\
		$100$ & $2$ & $3$ & $322,600$ & \bm{$98.34$} \\
		$100$ & $2$ & $4$ & $329,600$ & $98.24$ \\
		$100$ & $4$ & $2$ & $631,200$ & $97.44$ \\
		$100$ & $4$ & $3$ & $636,200$ & $98.24$ \\
		$100$ & $4$ & $4$ & $643,200$ & $97.98$ \\
	\end{tabular}\caption{Experiment~\ref{exp:MNIST}: MNIST results obtained by different configurations of ExSpliNet with $p_\ell=q_\ell=1$, $L=2$.}  \label{tab:MNIST}
\end{table}

\begin{experiment}\label{exp:FMNIST}
	The FMNIST dataset is a classical dataset of grayscale images representing fashion products. It also contains $60,000$ training images and $10,000$ testing images. The task consists of classifying a given image of the dataset in one of the ten output classes (so $O=10$), where each class represents a particular garment. The images have $28\times28$ pixels (so $D=784$).
	Table~\ref{tab:FMNIST} summarizes the results for different configurations of ExSpliNet, using $p_\ell=q_\ell=1$, $L=2$, and the Adam optimizer with a learning rate of $0.001$ and $50$ epochs. The results are summarized in Table~\ref{tab:FMNIST}. The best result is indicated in boldface. Here, ``\% Acc'' stands for the test accuracy of the classification in percentage. 
\end{experiment}
\begin{table}[t!]
	\centering\small
	\begin{tabular}{rrrrr} \hline
		$T$ & $N_\ell$ & $M_\ell$ & Params & \% Acc \\ \hline
		$50$ & $2$ & $2$ & $158,800$ & $86.03$  \\
		$50$ & $2$ & $3$ & $161,300$ & $88.91$  \\
		$50$ & $2$ & $4$ & $164,800$ & $88.82$  \\
		$50$ & $4$ & $2$ & $315,600$ & $87.29$  \\
		$50$ & $4$ & $3$ & $318,100$ & $88.37$  \\
		$50$ & $4$ & $4$ & $321,600$ & $88.41$  \\		
		$100$ & $2$ & $2$ & $317,600$ & $86.69$  \\
		$100$ & $2$ & $3$ & $322,600$ & \bm{$89.56$}  \\
		$100$ & $2$ & $4$ & $329,600$ & $89.07$  \\
		$100$ & $4$ & $2$ & $631,200$ & $87.76$  \\
		$100$ & $4$ & $3$ & $636,200$ & $88.90$  \\
		$100$ & $4$ & $4$ & $643,200$ & $88.67$ \\
	\end{tabular}\caption{Experiment~\ref{exp:FMNIST}: FMNIST results obtained by different configurations of ExSpliNet with $p_\ell=q_\ell=1$, $L=2$.}  \label{tab:FMNIST}
\end{table}

	There exists a vast amount of literature covering the MNIST and FMNIST datasets, applying all kinds of different techniques, with or without preprocessing of the input data.
	As far as we know, the state-of-the-art results addressing the classification of MNIST images show a test accuracy of $99.91\%$; see, e.g., \cite{MNISTres}. We emphasize, however, that these results are obtained by means of ad hoc methods designed explicitly to execute such a task. Similarly, the state-of-the-art results addressing the classification of FMNIST images show a test accuracy of $96.91\%$; see, e.g., \cite{FMNISTres}.
	On the other hand, in its current form, ExSpliNet is a general-purpose model, not designed nor tailored for this specific task.
	To improve its performance in this context, the development and application of task-dependent inner functions for specific feature extraction might be an interesting direction of further research, for example, the investigation of convolutional spline operators such as in \cite{Fey}.

	\paragraph{Regression} 
	We now consider the Parkinson's telemonitoring dataset \cite{Tsanas2010a} for two regression tasks. %created by Tsanas and Little of the University of Oxford, in collaboration with 10 medical centers in the US and Intel Corporation.
	Remote tracking of Parkinson's desease (PD) progression is a medical practice that involves remotely monitoring patients who are not at the same location as the clinic. Usually, a patient has a monitoring device at home, and the resulting measurements are transmitted to the clinic through telephone or internet. Tracking PD symptom progression mostly relies on the unified Parkinson's disease rating scale (UPDRS), which displays presence and severity of symptoms \cite{Tsanas2010a}.
	The scale consists of 3 sections that assess (1) mentation, behavior, and mood, (2) activities of daily life, and (3) motor symptoms.  Total-UPDRS refers to the full range of the metric, 0--176, with 0 representing healthy and 176 total disability, and Motor-UPDRS refers to the motor section of the UPDRS ranging from 0 to 108.
	The Parkinson's telemonitoring dataset consists of a total of 5,875 voice recordings from 42 patients. Each voice recording consists of 16 biomedical voice measures (vocal features) and the related Motor-UPDRS and Total-UPDRS score.

	In the following two experiments we directly compare, in terms of mean absolute error (MAE), the performance of ExSpliNet with methods found in the literature.
	The Parkinson's telemonitoring problem was addressed in \cite{Tsanas2010a,Tsanas2010b} using linear regression least squares (LS), iteratively re-weighted least squares (IRLS), least absolute shrinkage and selection operator (LASSO), and a nonlinear regression method (CART).  	
	Other methods were used in \cite{Eskidere2012}:
	two kinds of support vector machines, regression SVM and LS-SVM, and two kinds of FFNN, namely MLPNN and GRNN.
	In those papers, to predict Motor-UPDRS and Total-UPDRS, it was observed that a log transformation of the vocal features reduces the test MAE, so we follow suit.

	In both ExSpliNet experiments we used the Adam optimizer with a learning rate of $0.001$ and $100$ epochs.
	As in \cite{Eskidere2012}, ten-fold cross validation was applied for evaluating the test performance of the model by averaging the MAE results obtained from all folds. For each considered setup, the averaged MAE value is provided and its standard deviation is marked in round brackets.

\begin{experiment}\label{exp:Motor-UPDRS}
	For the prediction of Motor-UPDRS, ExSpliNet (with $T = 100$, $L = 2$, $N_l = 2$, $M_l =25$, $p_l = q_l = 1$) achieves an MAE of \bm{$4.67$} \bm{$(0.12)$} obtained on the log transformed features (before normalization in $[0,1]$).
	On the other hand, in \cite{Eskidere2012} it is stated that the LS-SVM outperformed the other proposed methods for the prediction of Motor-UPDRS, in terms of lower prediction errors, with a best MAE of $4.87$ $(0.11)$ obtained on the log transformed features, beating the best results in \cite{Tsanas2010a,Tsanas2010b}, i.e., an MAE of $5.95$ $(0.19)$ obtained by CART and an MAE of $6.57$ $(0.17)$ obtained by LASSO.
\end{experiment}
\begin{experiment}\label{exp:Total-UPDRS}
    For the prediction of Total-UPDRS, ExSpliNet (with $T = 100$, $L = 2$, $N_l = 2$, $M_l =25$, $p_l = q_l = 1$) achieves an MAE of \bm{$5.95$} \bm{$(0.19)$} obtained on the log transformed features (before normalization in $[0,1]$).
    On the other hand, in \cite{Eskidere2012} it is stated that the LS-SVM outperformed the other proposed methods for the prediction of Total-UPDRS, in terms of lower prediction errors, with a best MAE of
	$6.18$ $(0.16)$ obtained on the log transformed features, beating the best results in \cite{Tsanas2010a,Tsanas2010b}, i.e., an MAE of $7.52$ $(0.25)$ obtained by CART and an MAE of $8.38$ $(0.23)$ obtained by LASSO.
\end{experiment}

	For the sake of completeness, we also mention that more recent results are found in \cite{Nilashi2016}. Therein, instead of applying directly regression methods, a more involved feature engineering procedure was performed: the dataset was subdivided in clusters using the EM algorithm and for each cluster a subset of features was selected and different methods were then performed, resulting in a lower MAE.

	\section{Conclusion}\label{sec:Conclusions}
	In this paper we proposed an interpretable and expressive neural network model, called ExSpliNet.
	We started with a definition of ExSpliNet as a Kolmogorov-like neural network model with univariate spline inner functions and $L$-variate spline outer functions, all of them in B-spline representations. Furthermore, we detailed a probabilistic interpretation in terms of feature extractors and probabilistic trees. In addition, we showed universal approximation results in the cases $L=1$ and $L=D$. We also discussed how the model can be efficiently encoded by exploiting B-spline properties.
	Finally, we tested the performance of the model on a small selection of synthetic approximation problems and classical machine learning benchmark datasets.

	ExSpliNet is highly customizable as it is steered by several hyperparameters. The presented experiments indicate that ExSpliNet outperforms FFNN in approximation tasks. The choice of high degrees seems to directly pay off in the accuracy for smooth functions. This is a feature inherited from tensor-product splines. ExSpliNet is particularly suited for solving differential problems in the PINN framework, as it combines high smoothness with good approximation properties. However, a more extended experimental work, targeting nonlinear and high-dimensional differential problems, is required to reveal the full potential of ExSpliNet in this field. In addition, a more profound study of the approximation capabilities is essential for a better understanding of the model and its performance, especially for the intermediate cases $1<L<D$ (see Remark~\ref{rmk:L-intermediate}). 

	In its current form, ExSpliNet is a general-purpose model.
	Since the spline additive model is very general, the development of task-dependent inner functions for specific feature extraction could improve the performance of ExSpliNet in the context of classical machine learning. An interesting direction is the incorporation of convolutional spline operators such as in \cite{Fey}.
	The study of the statistic learning theoretical generalization ability of the model is another promising direction of investigation.
	The exploration of nonuniform knot sequences, with possibly coinciding interior knots, could also be helpful to further increase the flexibility of the model.

	\section*{Acknowledgments}
	This work was supported in part by the MIUR Excellence Department Project awarded to the Department of Mathematics, University of Rome Tor Vergata (CUP E83C18000100006).

%%%%%%%%%%%%%%
% References %
%%%%%%%%%%%%%%

\bibliographystyle{plain}
\bibliography{exsplinet}

\end{document}